\tikzstyle{state}=[
\tikzstyle{midstate}=[
\tikzstyle{smallstate}=[
\tikzset{
    point/.style = {circle, draw, inner sep=0.05cm,fill,node contents={}},
}
\newcommand{\ind}{\perp\!\!\!\!\perp} 
\newtheorem{theorem}{Theorem}
\newtheorem{corollary}{Corollary}
\newtheorem{lemma}{Lemma}
\newtheorem{definition}{Definition}
\newcommand{\Xink}{X_{i;n}^{k}}
\newcommand{\UPink}{UP_{i;n}^{k}}
\newcommand{\Xinkstar}{X_{i;n}^{k, *}}
\newcommand{\vecXin}{$\{(X_{i;n}, X_{i+1;n}, \ldots, X_{d;n})\}_{n \in \mathbb{N}}$}
\setlist[description]{leftmargin=1em, itemindent=-1em}
\title{Causal de Finetti: On the Identification of Invariant Causal Structure in Exchangeable Data}
\author{%
Siyuan Guo$^{12*}$ \quad Viktor Tóth$^{1*}$ \quad Bernhard Schölkopf$^2$ \quad Ferenc Huszár$^1$ \\
$^1$University of Cambridge \quad $^2$Max Planck Institute for Intelligent Systems\\
\texttt{\{syg26,fh277\}@cam.ac.uk} \quad \texttt{toth.viktor7400@gmail.com} \\
\texttt{bs@tuebingen.mpg.de} \\
}
\begin{document}

\crefname{section}{§}{§§}

\maketitle
\def\thefootnote{*}\footnotetext{Equal contribution}

\begin{abstract}
Constraint-based causal discovery methods leverage conditional independence tests to infer causal relationships in a wide variety of applications. Just as the majority of machine learning methods, existing work focuses on studying \textit{independent and identically distributed} data. However, it is known that even with infinite i.i.d.\ data, constraint-based methods can only identify causal structures up to broad Markov equivalence classes, posing a fundamental limitation for causal discovery. In this work, we observe that exchangeable data contains richer conditional independence structure than i.i.d.\ data, and show how the richer structure can be leveraged for causal discovery. We first present causal de Finetti theorems, which state that exchangeable distributions with certain non-trivial conditional independences can always be represented as \textit{independent causal mechanism (ICM)} generative processes. We then present our main identifiability theorem, which shows that given data from an ICM generative process, its unique causal structure can be identified through performing conditional independence tests. We finally develop a causal discovery algorithm and demonstrate its applicability to inferring causal relationships from multi-environment data. Our code and models are publicly available at: \url{https://github.com/syguo96/Causal-de-Finetti}

\end{abstract}

\section{Introduction}

Learning causal structure from observational data is a key step towards causally robust predictions in machine learning. Most existing causal discovery theory \citep{Pearl2009Causality:Inference.} focuses on the study of \textit{independent and identically distributed (i.\,i.\,d.)} data. Indeed, a majority of practical methods \citep{spirtes2000causation, spirtes2000constructing, chickering2002optimal} based on i.i.d.\ data only allows us to determine causal structure up to broad equivalence classes, and going beyond that is known to be impossible without further constraints \citep{Pearl2009Causality:Inference.}. For example, the basic task of identifying a bivariate cause-effect relationship (i.e. $X$ causes $Y$ or $Y$ causes $X$) on i.i.d.\ data is known to be impossible. Current methods impose additional restrictions, e.g., linearity assumptions \citep{Shimizu2006ADiscovery, shimizu2011directlingam, hoyer2012causal} or assumptions about the additive nature of noise \citep{Hoyer2008NonlinearModels, Zhang2009OnModel, peters2014causal} to ensure identification. 

A more recent line of work relaxes the i.i.d.\ assumption and considers inferring causal structure from grouped or multi-environment data \citep{Peters2016CausalIntervals, Tian2001CausalChanges, Heinze-Deml2018InvariantModels, Rojas-Carulla2018InvariantModels,huang2020causal,Arjovsky2019InvariantMinimization}. Our key observation is that studying grouped data is akin to relaxing the assumption on the data generating process from i.i.d.\ to exchangeable. In this work, we study causal structure learning in exchangeable data and show unique causal structure identification is enabled by the richer conditional independence structure of exchangeable processes. 

Several works in multi-environment data implicitly leverage the \textit{independent causal mechanism (ICM)} assumption \citep{Janzing2010CausalCondition, Aldrich1989, Pearl2009Causality:Inference.}, which postulates that causal mechanisms of the true underlying generating process do not inform or influence one another. Despite having been widely applied \citep{Parascandolo2018, goyal2019recurrent, brehmer2022weakly, madan2021fast}, the ICM assumption is rarely stated any more formally than above, and thus lacks a statistical formalization. It is also unknown what the fundamental limitations of inferring causal structure under the ICM assumptions are. Our work makes three contributions to clarify these questions: \looseness=-1

\begin{itemize}[leftmargin=*]
\item{\textbf{Causal de Finetti theorems}} (\cref{sec:causal_de_finetti_bivariate}) provide a mathematical justification for the \textit{independent causal mechanism (ICM)} assumption in data generating processes. It states that any exchangeable process satisfying a certain set of conditional independence properties can be represented as a generative process in which factors in a Markov factorization are statistically independent. We show how causal de Finetti substantiates the ICM principle and call such models ICM-generative processes.
\item{\textbf{Our main identifiability theorem}} (\cref{sec:identifiability}), informally stated, shows that if data is sampled from an ICM generative process, the causal graph is uniquely identifiable by testing conditional independence relationships. \looseness=-1
\item{\textbf{Causal discovery in multi-environment data:}} Section \ref{sec:multi-env-data} connects the identifiability theorem for ICM generative models to the analysis of multi-environment data. This section establishes that multi-environment data can be viewed as observing finite marginals of i.i.d.\ copies of an exchangeable process. This then allows us to develop an algorithm for recovering causal structure from data coming from a sufficient number of environments.
\end{itemize}

Our work thus provides strong probabilistic justification for approaches based on the independent causal mechanisms assumption and algorithms that require non-i.i.d.\ grouped data. We review the use of exchangeability in causality and approaches for causal structure learning in grouped data in Section \ref{sec:related-work}. In Section \ref{sec:experiments}, we present experiments that validate our approach to inferring causal structure from multi-environment data using conditional independence testing. Fig. \ref{fig:iid_exchangeable_illustration} summarizes the main contributions of the paper and Fig. \ref{fig:illus_e_equiv} illustrates differences between data generated by causal graph under i.i.d.\ process and ICM-generative process. 

\begin{figure}
     \centering
     \begin{subfigure}[b]{0.3\textwidth}
        \centering
        \begin{tikzpicture}[line cap=round,line join=round,scale=.6,x=1cm,y=1cm]
            \fill[fill = {rgb,255:red,153; green,221; blue,255}, line width = 1, draw = black] (-30:1.5) -- (-30:3) arc (-30:90:3) -- (90:1.5) -- (90:1.5) arc (90:-30:1.5) node[midway, sloped, above = 0.1cm] {$\mathbf{X \rightarrow Y}$};
            \fill[fill = {rgb,255:red,238; green,136; blue,102}, line width = 1, draw = black] (90:1.5) -- (90:3) arc (90:210:3) -- (210:1.5) -- (210:1.5) arc (210:90:1.5) node[midway, sloped, above = 0.1cm] {$\mathbf{Y \rightarrow X}$};
            \fill[fill = {rgb,255:red,68; green,187; blue,153}, line width = 1, draw = black] (210:1.5) -- (210:3) arc (210:330:3) -- (330:1.5) arc (330:210:1.5) node[midway, sloped, below = 0.2cm] {$\mathbf{X \ind Y}$};
            \draw [line width = 1] (0,0) circle (1.5cm);
            \draw[postaction={decorate,decoration={raise= 2ex,text along path, reverse path, text align=center,text={|\bf\sffamily|ICM-generative process}}}] (3,0) arc(0:120:3);
            \draw[postaction={decorate,decoration={raise= -3ex,text along path, reverse path, text align=center,text={|\bf\sffamily|i.i.d}}}] (1.5,0) arc(0:90:1.5);
            \draw[fill = {rgb,255:red,238; green,221; blue,136}, line width = 1] (0,0) -- (-0.75*1.717, -0.75) arc (210:330:1.5cm) -- (0,0);
            \draw [fill=black] (0,0) circle (2.5pt);
        \end{tikzpicture}
        \caption{}
         \label{fig:iid_exchangeable_illustration}
     \end{subfigure}
     \hfill
     \begin{subfigure}[b]{0.65\textwidth}
         \centering
         \begin{tikzpicture}
            \node[midstate] (x1) at (4, 1.5) {$X_1$} ;
            \node[midstate] (y1) at (6, 1.5) {$Y_1$} ;
            \node[midstate] (x2) at (4, -1.1) {$X_2$} ;
            \node[midstate] (y2) at (6, -1.1) {$Y_2$} ;
            \node[rectangle, fill = gray!20, thick, draw = black] (latent) at (5, 0.2) {$\theta, \boldsymbol{\psi}$};
            \path[->] (x1) edge[line width = 1pt] (y1);
            \path[->] (x2) edge[line width = 1pt] (y2);
            \path[->] (latent) edge[line width = 1pt] (x1);
            \path[->] (latent) edge[line width = 1pt] (x2);
            \path[->] (latent) edge[line width = 1pt] (y1);
            \path[->] (latent) edge[line width = 1pt] (y2);
            \node at (5, 2.3) {\textbf{de Finetti}};
                
            \node[midstate] (x1) at (8, 1.5) {$X_1$} ;
            \node[midstate] (y1) at (10, 1.5) {$Y_1$} ;
            \node[midstate] (x2) at (8, -1.1) {$X_2$} ;
            \node[midstate] (y2) at (10, -1.1) {$Y_2$} ;
            \node[rectangle, fill = gray!20, thick, draw = black] (theta) at (8, 0.2) {$\theta$};
            \node[rectangle, fill = gray!20, thick, draw = black] (psi) at (10, 0.2) {$\boldsymbol{\psi}$};
            \path[->] (x1) edge[line width = 1pt] (y1);
            \path[->] (x2) edge[line width = 1pt] (y2);
            \path[->] (theta) edge[line width = 1pt] (x1);
            \path[->] (theta) edge[line width = 1pt] (x2);
            \path[->] (psi) edge[line width = 1pt] (y1);
            \path[->] (psi) edge[line width = 1pt] (y2);
            \node at (9, 2.3) {\textbf{Causal de Finetti}};
         \end{tikzpicture}
         \caption{}
         \label{fig:de-finetti-representation}
        \end{subfigure}
    \caption{(a) is an illustration showing how i.i.d.\ data and certain exchangeable data differ in identifying the correct causal structure for a bivariate model. Each quadrant represents a causal structure, ${X \ind Y}$, ${X \rightarrow Y}$, ${X \leftarrow Y}$. The inner circle represents i.i.d.\ regime and the outer circle represents certain exchangeable regime. Under i.i.d.\ data, one can only identify ${X \ind Y}$, whereas certain exchangeable data (i.e., ICM-generative processes) enables one to identify unique causal structures. (b) illustrates a differentiating factor between de Finetti and causal de Finetti's representation on exchangeable data. Causal de Finetti disentangles the latents and substantiates causal mechanisms are independent in the sense latent parameters governing each mechanisms are statistically independent.}
    \label{fig:iid_exchangeable_illustration}
\end{figure}

\begin{figure}
\centering
   \resizebox{.8\columnwidth}{!}{
\begin{tikzpicture}
    
    \node[midstate] (xi) at (-1, 1.5) {$X_i$};
    \node[midstate] (yi) at (1, 1.5) {$Y_i$};
    \path[->] (xi) edge[line width = 1pt] (yi);
    \plate [inner sep=.1cm,xshift=.02cm,yshift=.01cm] {plate0} {(xi)(yi)} {};
    \node at (0, 2.4) {\textbf{(a): i.i.d.\ process}};
    
    \node[midstate] (xe) at (-1, -1.2) {$X_i$};
    \node[midstate] (ye) at (1, -1.2) {$Y_i$};
    \node[rectangle, fill = gray!20, thick, draw = black] (theta) at (-1, 0) {$\theta$};
    \node[rectangle, fill = gray!20, thick, draw = black] (psi) at (1, 0) {$\boldsymbol{\psi}$};
    \path[->] (xe) edge[line width = 1pt] (ye);
    \path[->] (theta) edge[line width = 1pt] (xe);
    \path[->] (psi) edge[line width = 1pt] (ye);
    \plate [inner sep=.1cm,xshift=.02cm,yshift=.02cm] {plateE1} {(xe)(ye)} {};
    \plate [inner sep=.1cm,xshift=0.02cm,yshift=.02cm] {plateE2} {(plateE1)(theta)(psi)} {}; 
    \node at (0, -2.8) {\textbf{(b): exchangeable process}};
    
    \node[midstate] (x1) at (3, 1.5) {$X_1$} ;
    \node[midstate] (y1) at (5, 1.5) {$Y_1$} ;
    \node[midstate] (x2) at (3, -1.1) {$X_2$} ;
    \node[midstate] (y2) at (5, -1.1) {$Y_2$} ;
    \node[rectangle, fill = gray!20, thick, draw = black] (theta) at (3, 0.2) {$\theta$};
    \node[rectangle, fill = gray!20, thick, draw = black] (psi) at (5, 0.2) {$\boldsymbol{\psi}$};
    \path[->] (x1) edge[line width = 1pt] (y1);
    \path[->] (x2) edge[line width = 1pt] (y2);
    \path[->] (theta) edge[line width = 1pt] (x1);
    \path[->] (theta) edge[line width = 1pt] (x2);
    \path[->] (psi) edge[line width = 1pt] (y1);
    \path[->] (psi) edge[line width = 1pt] (y2);
    \plate [inner sep=.2cm,xshift=.02cm,yshift=.02cm] {plateUnroll1} {(x1)(y2)} {}; 
    \node at (4, -2.8) {\textbf{(c)}};
    
    \node[midstate] (x1) at (7, 1.5) {$X_1$} ;
    \node[midstate] (y1) at (9, 1.5) {$Y_1$} ;
    \node[midstate] (x2) at (7, -1.1) {$X_2$} ;
    \node[midstate] (y2) at (9, -1.1) {$Y_2$} ;
    \node[rectangle, fill = gray!20, thick, draw = black] (theta) at (7, 0.2) {$\theta$};
    \node[rectangle, fill = gray!20, thick, draw = black] (psi) at (9, 0.2) {$\boldsymbol{\psi}$};
    \path[->] (y1) edge[line width = 1pt] (x1);
    \path[->] (y2) edge[line width = 1pt] (x2);
    \path[->] (theta) edge[line width = 1pt] (x1);
    \path[->] (theta) edge[line width = 1pt] (x2);
    \path[->] (psi) edge[line width = 1pt] (y1);
    \path[->] (psi) edge[line width = 1pt] (y2);
    \plate [inner sep=.2cm,xshift=.02cm,yshift=.02cm] {plateUnroll1} {(x1)(y2)} {}; 
    \node at (8, -2.8) {\textbf{(d)}};
\end{tikzpicture}}
\caption{An illustration demonstrates different conditional independence relationships contained in i.i.d.\ process and ICM-generative process. (a): A causal graph generated under an i.i.d.\ process; (b): A causal graph generated under ICM-generative process; Unrolling the inner plate notation from (b), we visualize the process with two samples. Causal graphs $X \rightarrow Y$ and $Y \rightarrow X$ generated under an i.i.d.\ process share the same conditional independences $\{\emptyset\}$ and are thus observationally equivalent. (c) and (d) show the corresponding graphs under ICM-generative processes. (c) has $X_1 \ind Y_2 \mid X_2$ which does not hold in (d) and (d) has $X_1 \ind Y_2 \mid Y_1$ which does not hold in (c). One can thus differentiate the bivariate causal direction in ICM-generative processes.}
\label{fig:illus_e_equiv}
\end{figure}

\section{Preliminaries}

\subsection{The Causal Framework}
\label{sec:causal-framework}
A joint distribution $P(X_1, \ldots, X_N)$ over a set of variables $X_1, \ldots, X_N$ can be decomposed into simpler components in multiple ways. For example, by the chain rule of probability, one can factorize the joint as $P(X_1, \ldots, X_N) = \prod_{i=1}^N P(X_i \mid X_1, \ldots, X_{i-1})$. We say the joint distribution satisfies the \textit{Markov factorization} property with respect to a directed acyclic graph $\mathcal{G}$ if   
\begin{equation}
    P(X_1, \ldots, X_N) = \prod_{i=1}^N P(X_i \mid \textbf{PA}_i),
\label{eq:markovfactorization}
\end{equation}
where $\textbf{PA}_i$ are the direct parents of node $X_i$ in $\mathcal{G}$. While many factorisations can represent the same joint $P$, a specific one is called the \textit{causal Markov factorization}: in it, the factors  $P(X_i \mid \textbf{PA}_i)$ represent the causal mechanisms of the true underlying data generating process. One can use the causal factorization to predict effects of interventions, which we model mathematically by replacing the corresponding factor \citep{Pearl2009Causality:Inference.}.

Causal discovery aims at recovering the causal graph $\mathcal{G}$ and the corresponding causal Markov factorization from the joint distribution $P$. This can be done by matching the conditional independence structure implied by the graph $\mathcal{G}$ to those observable in the joint distribution $P$. To facilitate this matching, an elaborate graphical terminology has been developed, as detailed in Appendix \ref{sec:graphical-terminology}. Unfortunately, under the assumption that data is sampled i.i.d.\ from $P$, the true underlying $\mathcal{G}$ cannot be uniquely determined, only up to broad equivalence classes. The conditional independence structure of i.i.d.\ processes is not rich enough to facilitate identifiability of the causal structure $\mathcal{G}$.

\textbf{Independence of Causal Mechanisms, Causal and Anti-Causal Machine Learning} In addition to the study of Markov factorization, recent work \citep{Janzing2010CausalCondition} studies the behaviour of causal mechanisms and postulates the \textit{Independent Causal Mechanism (ICM)} principle, which states: \looseness=-1
\begin{center}
    Causal mechanisms are independent of each other in the sense that a change in one mechanism $P(X_i \mid \textbf{PA}_i)$ does not inform or influence any of the other mechanisms $P(X_j \mid \textbf{PA}_j)$, for $i \neq j$. 
\end{center}

The notion of invariant, independent and autonomous mechanisms have a long history in causality research: \cite{Haavelmo1944} and \cite{Aldrich1989} discuss the historical development of autonomous mechanisms in economics and \cite{Pearl2009Causality:Inference.} also argues that causal mechanisms are modular in nature. \cite{Scholkopf2012OnLearning} pointed out the implications of this principle when using machine learning techniques in  \textit{causal or anti-causal learning} settings, i.\,e., when the task is to predict an effect from a cause or a cause from an effect, respectively. The ICM principle implies that semi-supervised learning is only successful in the anti-causal direction, while the predictor can be robustly applied to new domains if learning is in the causal direction. While these observations seem intuitively true, it is difficult to ground their meaning in the langauge of probability or information. As we will see, these difficulties can be resolved once we consider non-i.i.d.\ data generating processes.

\subsection{Exchangeability}
\label{sec:exchangeability}
As we have seen, i.i.d.\ processes have a limitation that their conditional independence structure is not rich enough to support identifiability of the full causal graph. We thus turn our attention to a richer class of processes, exchangeable sequences.
\begin{definition}[Exchangeable sequence]
\label{def:exchangeable_sequence}
A finite sequence of random variables $X_1, X_2, \ldots, X_N$ is \textbf{exchangeable}, if for any permutation $\pi$ of its indices $\{1, \ldots, N\}$:
\begin{equation}
\label{eq:univariate_exchangeability}
P(X_{\pi(1)}, \ldots, X_{\pi(N)}) = P(X_1, \ldots, X_N)
\end{equation}
An \textbf{infinite exchangeable} sequence is a sequence where for any $N \in \mathbb{N}$, its finite sequence of length $N$ is exchangeable. 
\end{definition}

Exchangeability is a notion of symmetry. Definition \ref{def:exchangeable_sequence} informally states the order of observations does not matter. Recall a finite sequence of random variables is \textit{independent and identically distributed} if its joint distribution satisfies $P(X_1, \ldots, X_N) = \prod_{i=1}^N P(X_i)$. Of course, such an i.i.d.\ sequence is automatically exchangeable but not all exchangeable sequences are i.i.d. To clarify the connection between exchangeable and i.i.d.\ sequences, recall the de Finetti theorem:
\begin{theorem}[De Finetti's representation theorem \citep{deFinetti1931}]
\label{thm:de-finetti}
Let $(X_n)_{n \in \mathbb{N}}$ be an infinite sequence of binary\footnote{De Finetti's representation theorem has been extended to categorical and continuous variables \citep{Klenke2008ProbabilityCourse}.} random variables. The sequence is exchangeable if and only if there exists a random variable $\theta \in [0, 1]$ such that $X_1, X_2, ...$ are conditionally independent and identically distributed given $\theta$, with a probability measure $\mu$ on $\theta$. Mathematically speaking, given any sequence $(\mathbf{x}_1,.., \mathbf{x}_N) \in \{0, 1\}^N$:
\begin{equation}
\label{eq:definetti}
P(\mathbf{x}_1, \ldots, \mathbf{x}_N) = \int \prod_{i=1}^N p(\mathbf{x}_i \mid \theta) d\mu(\theta)
\end{equation}
\end{theorem}
Informally, the theorem states that an exchangeable sequence can always be represented as a mixture of i.i.d.\ sequences. 
De Finetti's representation theorem has important consequences for Bayesian inference. Bayesian statistics, unlike frequentist, takes the view that the parameter is a latent variable, instead of an unknown point estimate. Bayes' theorem estimates the parameter via calculating posterior density $p(\theta|\mathbf{x}_1,.., \mathbf{x}_N)$. De Finetti's representation theorem demonstrates \citep{o2009exchangeability} that rather than metaphysical belief about the true model, it is due to our judgement that the observations are exchangeable that underlies our standard use of Bayesian modelling involving observations are i.i.d.\ conditioned on some unknown latent variable.


\section{Causal de Finetti Theorems \label{sec:causal_de_finetti_bivariate}
}

Just as de Finetti justifies Bayesian modelling, we will introduce causal de Finetti theorems that justify causal modelling via  probability theory. 
We first motivate our study of exchangeable sequences by noting that they have a richer conditional dependence structure than i.i.d.\ sequences. We illustrate what this means concretely in the simplest possible case of a pair of variables.

Let $X$ denote a random variable and $x$ denote a random variable's particular realization. Let $[n]$ denotes the set of positive integers that are less than or equal to $n$, i.e. $[n] = \{1, \ldots, n\}$. 
\begin{definition}[Exchangeable pairs]\label{def:exchangeable_pair} An infinite sequence of random variable pairs $(X_n, Y_n)_{n \in \mathbb{N}}$ is exchangeable if for any permutation $\pi$ and for any finite number $N$, it satisfies
    \begin{equation}
    \label{eq:exchangeable_variable_pairs}
    P(X_{\pi(1)}, Y_{\pi(1)}, \ldots , X_{\pi(N)}, Y_{\pi(N)}) = P(X_1, Y_1, \ldots, X_N, Y_N)
    \end{equation}
\end{definition}

In an i.i.d.\ sequence over pairs, the only non-trivial independence is $X_i \ind Y_i$. Since the distribution is identical, it either holds for all $i$ or does not hold for all $i$. In an exchangeable sequence of pairs, one can consider other non-trivial independence, for example, $Y_i \ind X_j \mid X_i$. This conditional independence relationship trivially holds in i.i.d.\ sequences, but it may or may not hold in exchangeable sequences. Therefore, one can hope its absence or presence carries some useful information about some underlying causal structure. 
Here, we present the causal de Finetti theorems, which illustrate the type of causal structure this conditional independence relationship implies.

\begin{theorem}[Causal de Finetti -- bivariate]
\label{thm:causaldf-bivariate}
Let $\{(X_n, Y_n)\}_{n \in \mathbb{N}}$ be an infinite sequence of binary random variable pairs. The sequence is:
\begin{enumerate}
    \item infinitely exchangeable, and satisfies
    \item $\forall n \in \mathbb{N}$: $Y_{[n]} \ind X_{n+1} \mid X_{[n]}$ 
\end{enumerate}
if and only if there exist two random variables $\theta \in [0 ,1]$ and $\boldsymbol{\psi} \in [0, 1]^2$ with probability measures $\mu, \nu$ such that the joint probability can be represented as 
\begin{equation}
\label{causaldf}
 P(x_1, y_1, \ldots, x_N, y_N) = \int \prod_{n=1}^N p(y_n \mid x_n, \boldsymbol{\psi}) p(x_n \mid \theta) d\mu(\theta) d\nu(\boldsymbol{\psi})
\end{equation}
\end{theorem}

Informally, the theorem states that an exchangeable sequence of random variable pairs satisfying an additional set of conditional independence properties, can always be represented as a mixture of i.i.d.\ sequences which all share the same underlying Markov factorization structure, and thus, an \emph{invariant causal structure}. In fact, such exchangeable data can be interpreted as data generated under the ICM assumption. Recall the independent causal mechanism is loosely denoted as "$P_{\text{effect} \mid \text{cause}} \ind P_{\text{cause}}$". ICM assumption can be modelled by Eq.~\ref{causaldf} in the sense that causal mechanisms are characterized by latent variables: mechanisms do not influence each other if one can separately manipulate each latent variable controlling different mechanisms; further, latent variables governing each mechanism are statistically independent with each other, supporting mechanisms do not inform one another. We thus call the generative process in Eq.~\ref{causaldf} as ICM-generative process. 
 Causal de Finetti, just as how de Finetti substantiates Bayesian modelling, 
demonstrates that rather than metaphysical belief about independent causal mechanisms, it is due to our judgement that observations are exchangeable and the sufficiency to predict the target variable $Y$ with the corresponding $X$ values irrespective of other $X$ observations underlie our standard use of causal modelling. We thus substantiate ICM by detailing the statistical assumptions one implicitly make when assuming ICM. 

\begin{wrapfigure}[8]{r}{0.2\textwidth}
    \vspace{-0.3cm}
    \scalebox{0.8}{
    \begin{tikzpicture}
        \node (x1) at (0,0) [label=above:Patient$_1$,point];
        \node (y1) at (2,0) [label = above:Diagnosis$_1$, point];
        \node (theta) [below = 1cm of x1] [label = left:\rotatebox{90}{Hospital}, point];
        \node (psi) [below = 1cm of y1] [label = left:\rotatebox{90}{Doctor}, point];
        \node (x2)  [below = 1cm of theta] [label=below:Patient$_2$,point];
        \node (y2)  [below = 1cm of psi] [label = below:Diagnosis$_2$, point];
        \path[->] (x1) edge  (y1);
        \path[->] (x2) edge  (y2);
        \path[->] (theta) edge (x1);
        \path[->] (psi) edge (y1);
        \path[->] (theta) edge (x2);
        \path[->] (psi) edge (y2);  
    \end{tikzpicture}}
\end{wrapfigure}

As an example, consider the causal graph on the right. Imagine in a hospital there are two patients. A patient's symptom is the cause of a doctor's diagnosis. Suppose we are interested to predict a patient's diagnosis given her symptom. The conditional independence says knowing another patient's symptom will not help us to predict the diagnosis of this patient if we know this patient's symptoms already. The conditional independence thus formulated the intuition behind causal and anti-causal problem in the language of probability: the distribution of the cause, other patients' symptoms in this case, will not help prediction about the effect given cause, i.e.,  one patient's diagnosis given his own symptoms.

\textbf{Causal de Finetti vs.\ de Finetti} 
To see the difference between causal de Finetti and de Finetti's representation theorem, we observe that a direct application of de Finetti theorem on exchangeable data that may or may not contain causal information results in a factorization as: 
\begin{equation}
     P(x_1, y_1, \ldots, x_N, y_N) = \int \prod_{n=1}^N p(y_n, x_n \mid \theta) d\mu(\theta)
\end{equation}
As observations are conditionally i.i.d.\ given latent variable $\theta$, learning $\theta$ is thus sufficient to achieve maximum prediction power. This finding corroborates empirical results in the machine learning community, where training often produces an entangled representation that achieves strong prediction accuracy. However, with the fast development of powerful machine learning applications \citep{brown2020language}, both deep learning and causal communities advocate the need for disentangled representations \citep{Scholkopf2021TowardsLearning, bengio2013representation, locatello2019challenging}, which offer greater control, interpretability, and generalization capabilities. Causal de Finetti shows that, in fact, given exchangeable data satisfying the causal and anti-causal learning phenomenon formulated in conditional independences, there are statistically independent latent variables controlling each causal mechanism. It shows one can achieve both maximum prediction power and disentangled representations. Fig. ~\ref{fig:de-finetti-representation} illustrates a visualization of the differences between de Finetti and causal de Finetti theorems. 

We next illustrate causal de Finetti in the general multivariate form (see Appendix \ref{sec:proof_causal_de_finetti} for proof):
\begin{definition}[Exchangeable arrays] An array of size $d$ contains variables $(X_{1;n}, \ldots, X_{d;n})$ where $X_{d;n}$ denotes the $d$-th random variable in $n$-th array. Such an array is denoted as $\mathbf{X}_{:;n}$. A finite sequence of size $d$ arrays is \textbf{exchangeable}, if 
\begin{equation}
    P(\mathbf{X}_{:;\pi(1)}, \ldots, \mathbf{X}_{:;\pi(N)}) = P(\mathbf{X}_{:; 1}, \ldots, \mathbf{X}_{:;N})
\end{equation}
\end{definition}
\begin{theorem}[Causal de Finetti -- multivariate]
\label{thm:causaldf-multivariate}
Let $\{(X_{1;n}, X_{2;n}, \ldots X_{d;n})\}_{n \in \mathbb{N}}$ be an infinite sequence of $d$-tuple binary random variables. The sequence is
\begin{enumerate}
    \item infinitely exchangeable, and
    \item if there exists a DAG $\mathcal{G}$ such that $\forall i \in [d], \forall n \in \mathbb{N}$:
    $$X_{i;[n]} \ind \overline{\textbf{ND}}_{i;[n]}, \textbf{ND}_{i;n+1} | \textbf{PA}_{i;[n]}$$ where $\textbf{PA}_i$ selects parents of node $i$ and $\textbf{ND}_i$ selects non-descendants of node $i$ in $\mathcal{G}$. $\overline{\textbf{ND}}_i$ denotes the set of non-descendants of node $i$ excluding its own parents.
\end{enumerate}
if and only if there exist $d$ random variables where $\boldsymbol{\theta_i} \in [0, 1]^{2^{|\textbf{PA}_i|}}$ with suitable probability measures $\{\nu_i\}$ such that the joint probability can be written as
\begin{equation}
    P(\mathbf{x}_{:, 1:N}) = \int \int \prod_{n=1}^N \prod_{i=1}^d p(x_{i;n} \mid \textbf{pa}_{i;n}, \boldsymbol{\theta_i}) d\nu_1(\boldsymbol{\theta_1}) \dots d\nu_d(\boldsymbol{\theta_d}),
\label{eq:icmfactorization}
\end{equation}
where $\mathbf{x}_{:, 1:N} := \{(x_{1;n}, \ldots, x_{d;n})\}_{n=1}^N$.
\end{theorem}

Theorem \ref{thm:causaldf-bivariate} is a special case of Theorem \ref{thm:causaldf-multivariate} when $d=2$. Informally, it states that an exchangeable sequence of size $d$ random arrays satisfying an additional set of conditional independence properties with respect to a DAG, can always be represented as a mixture of i.i.d.\ sequences which all share the same underlying Markov factorization structure as the corresponding DAG. 
The set of conditional independence properties in condition $2$ can be interpreted via its decomposition:
$$X_{i;[n]} \ind \overline{\textbf{ND}}_{i;[n]} \mid \textbf{PA}_{i;[n]}$$
    This shows that the direct parents of one node form a Markov blanket for its other non-descendant nodes. In other words, to infer about the variable of interest,  it is sufficient to know the variable's direct parents irrespective of other non-descendants. 
$$X_{i;[n]} \ind \textbf{ND}_{i;n+1} \mid \textbf{PA}_{i;[n]}$$
This conditional independence is another example that exchangeable data has richer conditional independence structure. It trivially holds in i.i.d.\ sequences but may or may not hold in exchangeable data. Informally, it states that to infer the variable of interest, it is sufficient to know its corresponding parents irrespective of non-descendants in other observations. This formulated the intuition behind the causal and anti-causal problem in the language of probability: the distribution of the variables in the causal direction, in this case, non-descendants in other observations will not help prediction if covariates contain a complete set of the corresponding direct causal parents. 

\begin{wrapfigure}[8]{r}{0.3\textwidth}

    \scalebox{0.9}{
    \begin{tikzpicture}
        \node (x1) at (0,0) [label=above:Campaign$_1$,point];
        \node (y1) at (1.5,0) [label = above:Apply$_1$, point];
        \node (z1) at (3,0) [label = above:Admission$_1$, point];
        \node (theta) [below = 0.6cm of x1] [label = left:\rotatebox{90}{School}, point];
        \node (psi) [below = .6cm of y1] [label = left:\rotatebox{90}{Student}, point];
        \node (eta) [below = .6cm of z1] [label = left:\rotatebox{90}{Uni}, point];
        \node (x2)  [below = 0.6cm of theta] [label=below:Campaign$_2$,point];
        \node (y2)  [below = 0.6cm of psi] [label = below:Apply$_2$, point];
        \node (z2)  [below = 0.6cm of eta] [label = below:Admission$_2$, point];
        \path[->] (x1) edge  (y1);
        \path[->] (y1) edge (z1);
        \path[->] (x2) edge  (y2);
        \path[->] (y2) edge (z2);
        \path[->] (theta) edge (x1);
        \path[->] (psi) edge (y1);
        \path[->] (eta) edge (z1);
        \path[->] (theta) edge (x2);
        \path[->] (psi) edge (y2);
        \path[->] (eta) edge (z2);
        
    \end{tikzpicture}}
    \end{wrapfigure} 
To illustrate how Theorem \ref{thm:causaldf-multivariate} also justifies the ICM in action,  consider the causal graph to the right. Imagine high schools host campaigns to advertise university opportunities and encourage students to apply. Students' decision to apply is the cause of their university admission outcomes. Suppose we are interested in understanding how influential school campaigns are on students' decision to apply, i.e., the mechanism of "Apply $\mid$ Campaign". We expect that knowing a particular student's decision to apply after attending a school campaign will not be influenced by the other school campaigns the student did not attend; instead, knowing other students' decisions to apply after their attendance in campaigns will help the prediction of this particular student's decision to apply. Similarly, knowing other students' university admission outcomes will also be helpful. This is because more students decide to apply implies the effectiveness of school campaigns, and more university admission acceptance implies more students decide to apply. Causal de Finetti says given such assumptions and exchangeable data, it naturally holds that there exist latent variables, represented by high school, student and university, and they are independent. \looseness=-1

\textbf{Extension beyond binary} Above theorems are presented in its bivariate and multivariate forms for binary variables. In general, it is easy to extend the results to categorical variables\footnote{\citep{Barrett2009TheSpaces} provides an alternative proof of conditional de Finetti in quantum theory for categorical variables.}. Just as the progression of the proofs for de Finetti's theorem, we hypothesize causal de Finetti holds true for continuous variables. Here we state the theorem in its multivariate form for categorical variables.
\begin{theorem}[Causal de Finetti -- multivariate and categorical]
Consider an infinite sequence of size-$d$ random arrays $\{(X_{1;n}, X_{2;n}, \ldots X_{d;n})\}_{n \in \mathbb{N}}$, where each variable $X_{i;n}$ takes values in $\{1, \ldots, k_i\}$. The sequence is:
\begin{enumerate}
    \item infinitely exchangeable, and
    \item if there exists a DAG $\mathcal{G}$ such that $\forall i \in [d], \forall n \in \mathbb{N}$:
    $$X_{i;[n]} \ind \overline{\textbf{ND}}_{i;[n]}, \textbf{ND}_{i;n+1} | \textbf{PA}_{i;[n]}$$ where $\textbf{PA}_i$ selects parents of node $i$ and $\textbf{ND}_i$ selects non-descendants of node $i$ in $\mathcal{G}$. $\overline{\textbf{ND}}_i$ denotes the set of non-descendants of node $i$ excluding its own parents.
\end{enumerate}
if and only if there exist $d$ random variables where $\boldsymbol{\theta_i} \in [0,1]^{k_i \times \prod_{X_j \in \textbf{PA}_i} k_j } $ and every column of $\boldsymbol{\theta}_i$ sum to $1$ with suitable probability measures $\{\nu_i\}$ such that the joint probability can be written as
\begin{equation}
    P(\mathbf{x}_{:, 1:N}) = \int \int \prod_{n=1}^N \prod_{i=1}^d p(x_{i;n}|\textbf{pa}_{i;n}, \boldsymbol{\theta_i}) d\nu_1(\boldsymbol{\theta_1}) \dots d\nu_d(\boldsymbol{\theta_d}),
\end{equation}
where $\mathbf{x}_{:, 1:N} := \{(x_{1;n}, \ldots, x_{d;n})\}_{n=1}^N$.
\end{theorem}

\section{Identifiability Result}
\label{sec:identifiability}

The causal de Finetti theorems show that exchangeable processes can be represented as ICM generative process when certain CI statements hold. However, such a representation is only really useful for causal discovery if it is unique - in other words we would like if only one such decomposition were possible for any given exchangeable process. This property is called \emph{identifiability}. In this section we study the identifiability of ICM generative process. We start by introducing graphical terminology. 

\begin{definition}[Acyclic directed mixed graph (ADMG) \citep{Richardson2003}]
\label{def:admg}
    An acyclic directed mixed graph can contain two types of edges: directed '$\rightarrow$' or bi-directed '$\leftrightarrow$'. When an ADMG does not contain any bi-directed edge, it becomes a directed acyclic graph (DAG). 
\end{definition}

\begin{definition}[$\mathcal{I}$-map \citep{koller2009probabilistic}]
    Given $P$ is a distribution, $\mathcal{I}(P)$ denotes the set of conditional independence relationships of the form $X \ind Y \mid Z$ that hold in $P$. Given $\mathcal{G}$ be a ADMG, $\mathcal{I}(\mathcal{G})$ denotes the set of conditional independence assumptions encoded in $\mathcal{G}$ which can be directly read-off via m-separation \citep{zhang2012transformational}. When $\mathcal{G}$ is a DAG, $\mathcal{I}(\mathcal{G})$ can be directly read-off via d-separation \citep{pearl1988probabilistic}. 
\end{definition}

\begin{definition}[Global Markov property and faithfulness \citep{zhang2012transformational}]
Given a ADMG $\mathcal{G}$ and a joint distribution $P$, the distribution satisfies global Markov property with respect to $\mathcal{G}$ if $\mathcal{I}(\mathcal{G}) \subseteq \mathcal{I}(P)$. Sometimes it is also called $P$ is \textbf{Markovian} with respect to $\mathcal{G}$. We say $P$ is \textbf{faithful} to $\mathcal{G}$ if $\mathcal{I}(P) \subseteq \mathcal{I}(\mathcal{G})$.
\end{definition}

Definition \ref{def:icm-operator} defines the mapping from causal graphs generated under an i.i.d.\ process to those generated under an exchangeable process. The latent variables in causal de Finetti theorems can be represented by bi-directed edges in Definition \ref{def:admg}. 

\begin{definition}[ICM operator on a DAG]
\label{def:icm-operator}
   Let $U$ be the space of all DAGs whose nodes represent $X_1, \ldots, X_d$. Let $V$ be the space of ADMGs whose nodes represent $\{(X_{i;n})\}$, where $i \in [d], n \in \mathbb{N}$. A mapping $F$ from $U$ to $V$ is an ICM operator if $F(\mathcal{G})$ satisfies:
\begin{itemize}
    \item $F(\mathcal{G})$ restricted to the subset of vertices $\{X_{1; n}, \ldots, X_{d; n}\}$ is a DAG $\mathcal{G}$, for any $n \in \mathbb{N}$,
    \item $X_{i;n} \leftrightarrow X_{i;m}$ whenever $n \neq m$ for all $i \in [d]$
    \item there are no other edges other than stated above
\end{itemize}
We denote the resulting ADMG as $ICM(\mathcal{G})$. Let $\textbf{PA}_{i;n}^\mathcal{G}$ denote the parents of $X_{i;n}$ in $ICM(\mathcal{G})$ and similarly for $\textbf{ND}_{i;n}^\mathcal{G}$ for corresponding non-descendants.
\end{definition}

\begin{theorem}[Identifiability Theorem]
\label{thm:identifiability}
Consider the set of distributions that are both Markovian and faithful to $ICM(\mathcal{G})$, i.e., $\mathcal{E}(\mathcal{G}) := \{P: \mathcal{I}(P) = \mathcal{I}(\text{ICM}(\mathcal{G}))\}$.
Then, 
\begin{equation}
    \mathcal{E}(\mathcal{G}_1) = \mathcal{E}(\mathcal{G}_2) \  \text{if and only if} \  \mathcal{G}_1 = \mathcal{G}_2
\end{equation}
\end{theorem}
The set of graphs $\{\text{ICM}(\mathcal{G})\}$ characterizes the set of causal graphs for data sampled from ICM-generative processes. Given any two causal structures $\mathcal{G}_1$ and $\mathcal{G}_2$ underlying data sampled from ICM-generative processes, we say they are Markov equivalent if $\mathcal{E}(\mathcal{G}_1) = \mathcal{E}(\mathcal{G}_2)$. Theorem \ref{thm:identifiability} states given a distribution $P$ that is Markovian and faithful to $ICM(\mathcal{G})$, one can identify its unique causal structure as each graph induces unique conditional independences. 
See Appendix \ref{sec:proof_identifiability} for proof. 

\textbf{Connection to i.\,i.\,d.} Causal de Finetti theorems though stated formally under exchangeable process, it automatically holds for data generated under an i.i.d.\ process. When observing i.i.d.\ data, the measures $\nu_i$ in \eqref{eq:icmfactorization} are Dirac measures, and the de Finetti parameters $\{\theta_i\}$ are deterministic, i.e., fixed across multiple samples generated from the process. The identifiability theorem stated here, however, excludes distributions generated by marginally i.i.d.\ processes. It requires $P$ to be faithful to $ICM(\mathcal{G})$. If any one of the marginal distributions of $P$ can collapse to an i.i.d.\ process, i.e., there exists an index $d$ such that $P(X_{d;1}, \ldots, X_{d;N}) = \prod_n P(X_{d;n})$, then $P$ is not faithful to $ICM(\mathcal{G})$ since it contains extra conditional independence relationships. Fig.~\ref{fig:illus_e_equiv} illustrates that compared to i.\,i.\,d process, ICM-generative processes enable unique causal structure identification.

\section{Causal Structure Learning in Multi-environment Data} 
\label{sec:multi-env-data}

We established in Thm.\ \ref{thm:identifiability} that causal structure is identifiable in ICM generative models by testing for CI relationships in exchangeable data. For example, if  $Y_i \ind X_j \mid X_i$ holds for an exchangeable pair $(X_n, Y_n)$, we conclude that $X \rightarrow Y$, i.\,e.\ $X$ causes $Y$. But how exactly does one test for this in data?

To test if a CI statement holds between a set of random variables, one typically requires multiple samples, that is i.i.d.\ copies of the variables in question. Similarly, to apply our identification results in practice, we need access to multiple i.i.d.\ copies of the exchangeable pair $(X_n, Y_n), n\in \mathbb{N}$ (see Def.\ \ref{def:exchangeable_pair}). Each copy of $\{(X_n, Y_n)\}_{n\in \mathbb{N}}$ gives us a whole dataset containing a sequence of individual pairs, thus, we need multiple independent datasets to test for the CI condition. This requirement for multiple datasets connects our work to grouped or multi-environment data.

In the causal literature, grouped data refers to data available from multiple environments, each producing (conditionally) i.\,i.\,d observations from a different distribution, which are related through some \emph{invariant causal structure} shared by all environments. Grouped data underlies a wide range of causal discovery approaches \citep{Peters2016CausalIntervals, Tian2001CausalChanges, Heinze-Deml2018InvariantModels, Rojas-Carulla2018InvariantModels,huang2020causal,Arjovsky2019InvariantMinimization}. We can interpret multi-environment data through the lens of exchangeability as follow: In each environment $e \in \mathcal{E}$, we observe exchangeable samples $\mathbf{X}^e_{:;1:N_e}$ = \{$(X^e_{1;n}, \ldots,  X^e_{d;n})\}_{n=1\ldots N_e}$, where $X_{d;n}^e$ denotes the $d$-th random variable in $n$-th sample in environment $e$ and $N_e$ is the number of samples from environment $e$. Data across enviroments are independent and identically distributed in the sense that the distribution of $\mathbf{X}^e_{:;1:N}$ and $\mathbf{X}^{e'}_{:;1:N}$ is identical for all $N<\min(N_e, N_{e'})$. Each environment thus provides a finite marginal of an i.\,i.\.d copy of the same exchangeable process, just as we needed for testing CI. Alternatively, one can also interpret environments as samples from latent variables, i.e. ($\theta^e, \psi^e$) i.i.d.\ drawn from $p(\theta), p(\psi)$ characterizes environment $e$.  

Next, we propose the \textit{Causal-de-Finetti} algorithm, which guarantees to recover the DAG given multi-environment data consistent with ICM. In particular, the algorithm utilizes two samples per environment and a sufficiently large number of independent environments to enable identification. 

\textbf{Notation}: As every sample in all environments shares the same causal structure, we sometimes abbreviate the variable $X_{i;n}^e$ to $X_{i;n}$ or $X_i$. The results proved under abbreviated indices mean the abbreviated indices could take any values and the result remains the same. Let $S_n$ denotes the set containing nodes belong to $n-$th rank in a DAG $\mathcal{G}$'s topological ordering. (See details in Appendix \ref{sec:appendix_algorithm}) \looseness=-1

\begin{restatable}{lemma}{leafnodeidentification}
\label{lemma:leaf-node-identification}
  A node $X_{i;n} \in S_1$ if and only if for every $m \neq n$ and $j \neq i$, $X_{i;n} \ind X_{j;m} ~\vert~ \{X_{k;n}\}_{k\neq i}$.
\end{restatable}

\begin{restatable}{lemma}{edgeidentification}
\label{lemma:edgeidentification}
Let node $X_i \in S_n$ and $X_j \in S_m$ where $m < n$. Set $k:= n-m$. There does not exist a directed edge from $X_i$ to $X_j$ if and only if when $k = 1$,  $X_i \ind X_j \mid S_{>n}$; and when $k > 1$: $X_i \ind X_j \mid Z$, where $Z = S_{>n} \cup (\textbf{PA}_j \cap S_{<n}) \cup (S_n \backslash X_i)$.
\end{restatable}

Lemma \ref{lemma:leaf-node-identification} states the necessary and sufficient conditions to identify leaf nodes. Lemma \ref{lemma:edgeidentification}, intuitively says, to decide whether $X_i$ and $X_j$ have a direct edge, one should block all the potential non-directed paths. 
Step 1 of the algorithm is to iteratively identify and remove the current set of leaf nodes, and then search for the next set of leaf nodes until all nodes have been classified into their topological orders. Step 2 of the algorithm is to apply Lemma \ref{lemma:edgeidentification} to determine the existence of an edge between different topological orders. Algorithm \ref{alg:mvcausal_exchangeable} in Appendix \ref{sec:appendix_algorithm} details the exact procedure. 

\section{Experiments}
\label{sec:experiments}

We benchmark our method's performance against several state-of-the-art methods. As a measure of performance against methods for heterogeneous data, we compare against CD-NOD \citep{huang2020causal, Zhangetal17, huang2017behind}. As a measure of performance against methods designed for i.i.d. data, we compare with common causal structure learning algorithms, e.g. FCI \citep{spirtesmeekrichardson1995}, GES \citep{chickering2002optimal}, NOTEARS \citep{zheng2018dags}, DirectLinGAM \citep{shimizu2011directlingam} and PC algorithm \citep{spirtes2000causation}. Lastly, we compare with a random guess baseline. 

\textbf{Bivariate Causal Discovery} We generate multi-environment data as described in Section \ref{sec:multi-env-data}. Latent factors $\mathbf{N}$ were randomly generated with distinct and independent elements in each environment. Samples within each environment have the noise variables $\mathbf{\tilde{N}}$ generated via Laplace distributions conditioned on the latent factor. We observe bivariate data  $\mathbf{X} \in \mathbb{R}^2$ with $X_1$ and $X_2$ denotes the first and second entry of $\mathbf{X}$ and aim to uncover the causal direction between $X_1$ and $X_2$. Let $^e$ denote variables contained in environment $e$.
\begin{align*}
    & \mathbf{N^e} \sim \mathcal{U}[-1, 1] \\
    & \mathbf{\Tilde{N}^e} \sim \text{Laplace}(\mathbf{N}, 1) \\
    & \mathbf{X^e} = \mathbf{A^e}\mathbf{\tilde{N^e}} + \mathbf{B^e}\mathbf{\tilde{N}^{e \circ 2}}\mathds{1}_{\text{nonlinear}}(e)
\end{align*}
where $\circ 2$ denotes elementwise square operation. Specifically, $\mathbf{A}^e \in \mathbb{R}^{2 \times 2}$ is a randomly sampled triangular matrix and $\mathbf{B^e} = \mathbf{A^e} - \mathbf{I}$. We randomly sample bivariate structures, $X_1 \to X_2, X_2 \to X_1, X_1 \ind X_2$, by ensuring $\mathbf{A}$ is either a lower triangular, upper triangular or diagonal matrix. Our data further simulates a realistic situation, i.e., the causal structure remains invariant with changing functional relationships across environments. This is implemented by randomly sampling the existence of nonlinear dependence indicator $\mathds{1}_{\text{nonlinear}}(e)$ per environment. We perform three conditional independence tests with $\alpha = 0.05$ and output our estimate as the causal structure corresponding to the test with the highest $p$-value. We repeat the experiment for $100$ times and report the proportion of correct causal direction identified with varying numbers of environments.  Figure \ref{fig:experiment-results}(a) shows the proportion of correct bivariate causal direction detected as the number of environments $|\mathcal{E}|$ increases and the number of observations within each environment as fixed to be $2$. We observe Causal-de-Finetti algorithm outperforms all the other state-of-the-art methods and its accuracy converges close to $100\%$. This demonstrates its capability to handle datasets with limited samples per environment and changing functional relationships across environments. 

\begin{figure}%
    \captionsetup[subfigure]{oneside,margin={-1.5cm,0cm}}
    \centering
    \subfloat[Bivariate causal discovery]{{\includegraphics[width=6cm]{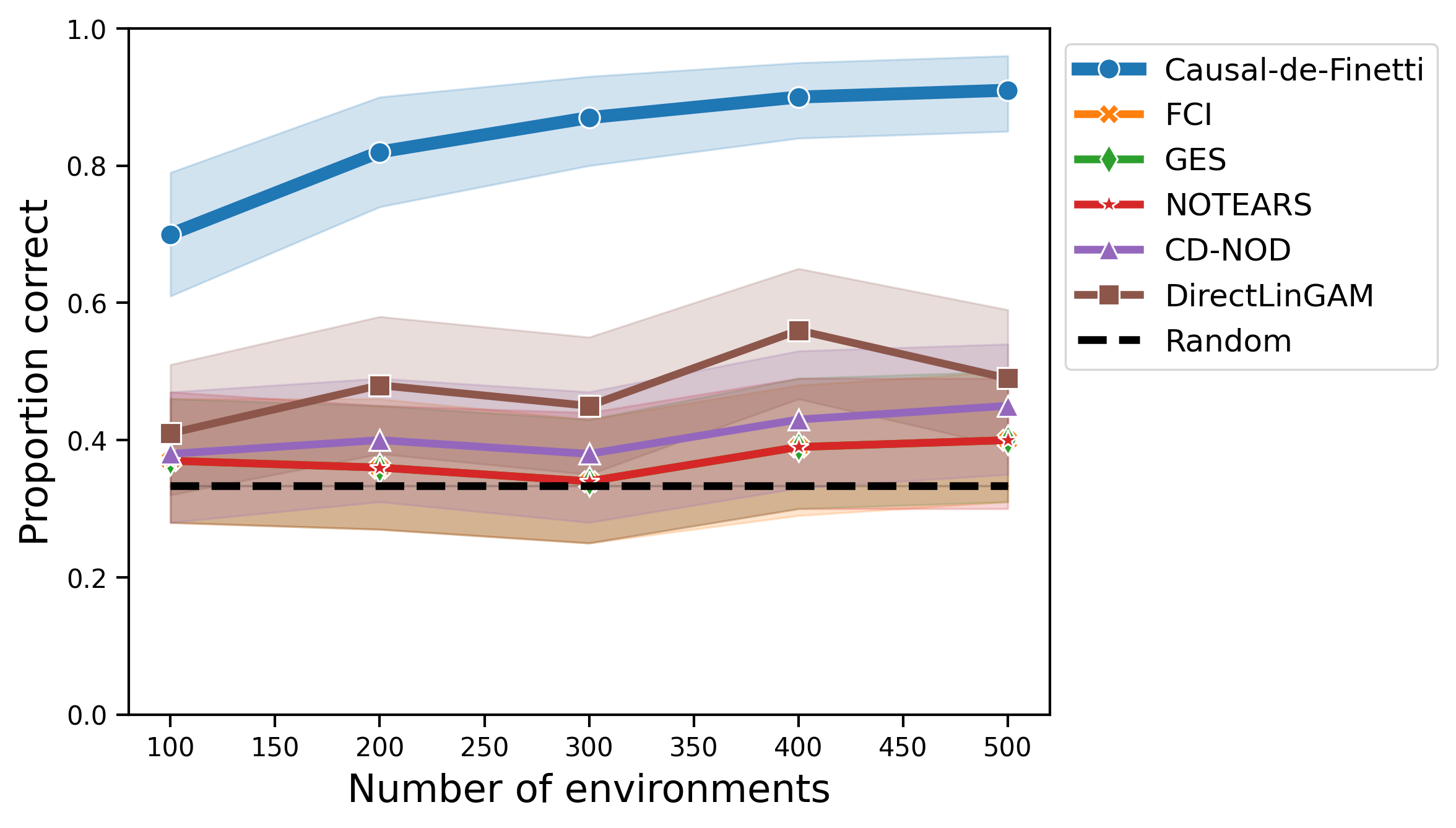} }}%
    \qquad
    \subfloat[Multivariate structure learning]{{\includegraphics[width=6cm]{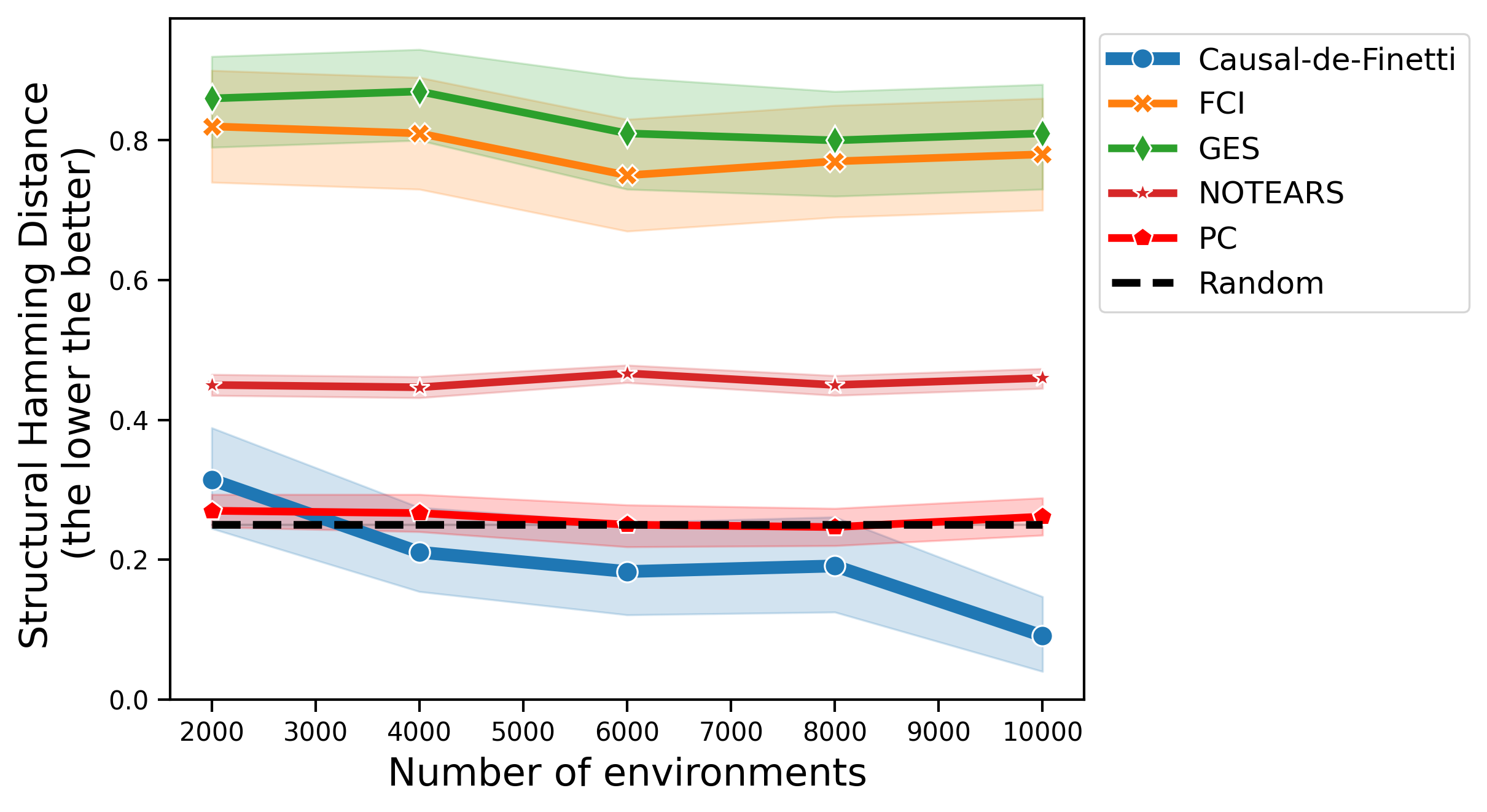} }}%
    \caption{Our method’s (“Causal-de-Finetti”) performance in identifying the correct underlying DAG, compared to the “CD-NOD”, “FCI”, “GES”, “NOTEARS”, "DirectLinGAM", "PC", "Random" baseline in bivariate and multivariate settings. Shown are the mean and $95\%$ confidence interval of the standard error of the mean for each method aggregated over $100$ experiments. "Causal-de-Finetti" identifies unique causal structures and is robust against changing functions across environments. \looseness=-1}
    \label{fig:experiment-results}%
\end{figure}

\textbf{Multivariate Causal Structure Discovery} We further test our algorithm's performance in identifying multivariate causal structure. We randomly generate causal graphs with $3$ variable nodes where each variable takes binary values. Figure \ref{fig:experiment-results}(b) shows the Structural Hamming Distance \citep{Tsamardinos2006} between true and estimated DAG averaged over $100$ experiments. We again observe Causal-de-Finetti outperforms all the other baselines, which validates our identifiability theorem and demonstrates algorithms designed for i.i.d.\ data does not work in our setting.

\newpage
\section{Discussion}
\label{sec:related-work}
\textbf{Causal exchangeability} 
\cite{dawid2021decision} introduces a decision-theoretic framework for causality and uses pre-treatment and post-treatment exchangeability as foundational assumptions on external data used to solve the decision problem. 
\cite{Jensen2020} studies object conditioning and show its probabilistic interpretations can be explained using exchangeability. Object conditioning, due to exchangeability, thus mitigates latent confounding and measurement errors for causal inference.
Our work provides a statistical understanding of ICM assumption: it is equivalent to assuming exchangeability and certain conditional independence conditions. \looseness=-1

\textbf{Causal structure learning} Within the study of i.i.d.\ data, it is well-known that one can only identify causal structures up to Markov equivalence classes \citep{pearl1988probabilistic}, and going beyond that is known to be impossible without further parametric constraints \citep{hoyer2008nonlinear, Shimizu2006ADiscovery}. A recent line of work considers a mixture of observational multi-environment data and interventional data to perform inference: \cite{Peters2016CausalIntervals, Rojas-Carulla2018InvariantModels, Arjovsky2019InvariantMinimization} use causal invariance property to discover stable predictors and \cite{huang2017behind, huang2020causal} estimate kernel mean embeddings of heterogenous data distributions to test the independence of causal mechanisms. \cite{monti2020causal} discovers bivariate causal direction by first recovering the underlying generating sources and then performing conditional independence tests on the recovered source factors. These algorithms on non-i.i.d.\ grouped data all demonstrate success, though it is unclear the connection between causal assumptions and probabilistic implications of grouped data. Our work observes that grouped data is akin to exchangeable sequences, containing richer conditional independence structures. In particular, ICM-generative processes with a sufficient number of environments allow unique causal structure identification.

\textbf{Relations to causality in time-series} The study of data generated from i.i.d.\ process, exchangeable process, and time-series can be seen as a progression to understand more structured data. Appendix \ref{sec:time-series} details the connections between ICM-generative processes and causality in time series \citep{runge2020discovering}.  


\textbf{Conclusion} We prove causal de Finetti theorems formalizing the independent causal mechanism assumption in data generating processes as concrete statistical conditions. We call the induced generative models ICM-generative processes. For  data sampled from ICM-generative processes, we show that one can identify unique causal structure. We build the connection between exchangeable and grouped data and justify the success of many methods leveraging ICM and algorithms in non-i.i.d.\ grouped data. Going beyond the i.i.d.\ assumption has been the bottleneck to applying machine learning to real-world situations. Rather than considering it a nuisance, our work shows an example of a theoretical advantage of exchangeable data in causal structure identification. 

\textbf{Acknowledgement} S.G.\ would like to acknowledge helpful discussions with Damon Wishick on understanding causal de Finetti theorem in multi-environment data and Thijs van Ommen for his constructive feedback in the reviewing process which corrected the original version of Lemma 2.

\bibliographystyle{abbrvnat}
\bibliography{main}

\newpage
\appendix

\section{Graphical Terminology}
\label{sec:graphical-terminology}

An arbitrary graph $\mathcal{G}$ consists of vertices $V$ and edges $E \subseteq V^2$ with $(v, v) \not \in E$ for any $v \in V$. Then $\mathcal{G} = (V, E)$ is a graph with $V:=\{1, \ldots, d\}$ and corresponding random variables $\{X_1, \ldots, X_d\}$. A variable $X_i$ is called a parent of $X_j$ if $(i, j) \in E$ and $(j, i) \not \in E$ and a child if $(j, i) \in E$ and $(i, j) \not \in E$. The set of parents of $X_j$ in $\mathcal{G}$ is denoted as $\textbf{PA}_i^\mathcal{G}$, and the set of its children by $\textbf{CH}_i^\mathcal{G}$. Whenever the graph $\mathcal{G}$ is obvious from the context, one can omit its specification in the above notations. Two variables $X_i$ and $X_j$ are adjacent if either $(i, j) \in E$ or $(j, i) \in E$. A pair of variables can be connected with a directed edge $X_i \to X_j$. If there does not exist a sequence of edges such that $X_i \rightarrow \dots \rightarrow X_i$ for all $i\in V$, then $\mathcal{G}$ is acyclic. 

A path in $\mathcal{G}$ is a sequence of (at least two) distinct vertices $i_1, \ldots, i_m$ such that there is an edge between $i_k$ and $i_{k+1}$ for all $k=1, \ldots, m-1$. If $i_k \to i_{k+1}$ for all $k$, then $X_{i_1}$ is an ancestor of $X_{i_m}$, and that $X_{i_m}$ is a descendant of $X_{i_1}$. The set of ancestors of $X_i$ is denoted as $\textbf{AN}_i^\mathcal{G}$ and $\textbf{DE}_i^\mathcal{G}$ denotes the set of descendants of $X_i$. All non-descendants of $X_i$, excluding itself, are denoted as $\textbf{ND}_i^\mathcal{G}$. In this work we use $\overline{\textbf{ND}}_i^\mathcal{G}$ to denote the set of non-descendants excluding its parents. 

Causal structure learning via performing conditional independence tests involves matching conditional independences contained in probability distributions with the conditional independence assumptions encoded in the graph. D-separation \citep{pearl1988probabilistic} provides a graphical criterion that characterizes the set of conditional independences in the graph. 

\begin{definition}[d-separation]
Given a directed acyclic graph $\mathcal{G}$, a path $p$ with vertices $i_1, \ldots, i_m$ is d-separated by a block of nodes $Z$ if and only if one of the two conditions holds: 
\begin{itemize}
    \item $p$ contains a chain $i_{k-1} \rightarrow i_k \rightarrow i_{k+1}$ or $i_{k-1} \leftarrow i_k \leftarrow i_{k+1}$, or a fork $i_{k-1} \leftarrow i_l \rightarrow i_{k+1}$ and $i_k \in Z$; 
\item $p$ contains a collider $i_{k-1} \rightarrow i_k \leftarrow i_{k+1}$ s.t. the middle node $i_k \not \in Z$ and none of its descendants is in $Z$. 
\end{itemize}
We then say $Z$ d-separates two disjoint subsets of vertices $X$ and $Y$ if it blocks every path from a node in $X$ to a node in $Y$, and write as $X \ind_\mathcal{G} Y | S$. 
\end{definition}
We refer the readers to \citep{Janzing2017ElementsAlgorithms} for more detailed graphical terminology.  

\section{Proof of Causal de Finetti} 
\label{sec:proof_causal_de_finetti}
Here we refer to Causal de Finetti as in its multivariate form, as bivariate is a subcase contained in multivariate form. We base our proof mostly on \citep{Kirsch2019AnTheorem}. 

\textbf{Preliminaries} For a probability measure $\mu$ on $\mathbb{R}^d$ we define the mixed moment by $m_\mathbf{a}(\mu) := \int \prod_{i=1}^d x_i^{a_i} d\mu(x_1, \ldots, x_d)$ whenever it exists (in the sense that $\int \prod_{i=1}^d |x_i|^{a_i} d\mu(x_1, \ldots, x_d) < \infty $ ). Below we will only deal with measures with compact support so that all moments exist and are finite. The following is a multivariate extension of method of moments:
\begin{theorem}[Multivariate method of moments]\label{thm:mv_method_moments}
Let $\mu_n(n \in \mathbb{N})$ be probability measures with support contained in a fixed interval $[a, b]^d$. If for all $\mathbf{u}$ the mixed moments $m_{\mathbf{u}}(\mu_n)$ converge to some $m_\mathbf{u}$ then the sequence $\mu_n$ converges weakly to a measure $\mu$ with moments $m_\mathbf{u}(\mu) = m_\mathbf{u}$ and with support contained in $[a, b]^d$. Further, if $\mu$ is a probability measure with support contained in $[a, b]^d$ and $\nu$ is a probability measure on $\mathbb{R}^d$ such that $m_\mathbf{u}(\mu) = m_\mathbf{u}(\nu)$ then $\mu = \nu$.
\end{theorem}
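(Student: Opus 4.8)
The plan is to treat the statement as two separate claims—first the moment-\emph{determinacy} assertion (a compactly supported measure is pinned down by its moments), and then the \emph{weak-convergence} assertion—because the convergence argument will invoke determinacy to identify the limit. In both parts the decisive structural fact is that everything lives on a fixed compact box; this is what tames the multivariate moment problem, which in general (for measures without compact support) can fail to be determinate even when all moments are finite.

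For determinacy, suppose $\mu$ is supported in $[a,b]^d$ and $\nu$ is any probability measure on $\mathbb{R}^d$ with $m_{\mathbf{u}}(\mu) = m_{\mathbf{u}}(\nu)$ for all multi-indices $\mathbf{u}$. First I would show that $\nu$ is itself compactly supported. Writing $R := \max(|a|,|b|)$, the pure even moments satisfy $\int x_i^{2k}\, d\nu = m_{2k\mathbf{e}_i}(\mu) \le R^{2k}$, so a Markov-type bound gives, for each coordinate $i$ and every $\epsilon>0$, $\nu(\{|x_i| > R+\epsilon\}) \le (R/(R+\epsilon))^{2k} \to 0$ as $k \to \infty$; hence $\nu$ is supported in $[-R,R]^d =: K$. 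Now both $\mu$ and $\nu$ are Borel probability measures on the compact set $K$ agreeing on all polynomials. By the multivariate Stone--Weierstrass theorem the polynomials are dense in $C(K)$, so the identity $\int f\, d\mu = \int f\, d\nu$ extends from polynomials to every $f \in C(K)$, and by the Riesz representation theorem (a finite measure on a compact metric space is determined by its action on $C(K)$) we conclude $\mu = \nu$.

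For the weak-convergence claim, the family $\{\mu_n\}$ is automatically tight because all $\mu_n$ are supported on the single compact set $[a,b]^d$; Prokhorov's theorem then yields relative sequential compactness in the topology of weak convergence. I would take any weakly convergent subsequence $\mu_{n_k} \rightharpoonup \mu$; its limit is again supported on $[a,b]^d$, and since each monomial $x^{\mathbf{u}}$ is bounded and continuous on this compact set, weak convergence gives $m_{\mathbf{u}}(\mu) = \lim_k m_{\mathbf{u}}(\mu_{n_k}) = m_{\mathbf{u}}$. Thus every subsequential limit is a compactly supported measure with the prescribed moments, and by the determinacy part already established these limits all coincide. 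A relatively compact sequence with a unique subsequential limit converges, so the whole sequence $\mu_n$ converges weakly to this common $\mu$, which has moments $m_{\mathbf{u}}(\mu) = m_{\mathbf{u}}$ and support in $[a,b]^d$, as required.

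The step I expect to be the main obstacle is the determinacy argument, and specifically the passage from ``agreement on all polynomials'' to ``agreement as measures.'' In the univariate case one can lean on classical criteria (Carleman or Hausdorff), but the multivariate moment problem can be genuinely indeterminate, so the resolution here rests entirely on compact support: this must first be \emph{transferred} to the a priori arbitrary $\nu$ (via the even-moment Markov bound above) and then \emph{exploited} through Stone--Weierstrass. Once compactness is secured the remainder reduces to routine functional analysis.
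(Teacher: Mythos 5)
Your proposal is correct, and it is genuinely more self-contained than what the paper does: the paper's entire proof of this theorem is two citations, deferring the weak-convergence claim to the first theorem of \cite{Haviland1936OnII} and the determinacy claim to the univariate Weierstrass argument of \cite{Kirsch2019AnTheorem}. Your determinacy argument coincides in spirit with the paper's second citation (polynomials dense in $C(K)$ by Stone--Weierstrass, then Riesz representation), but you add a step the paper's sketch silently skips and which is genuinely needed given how the theorem is stated: since $\nu$ is a priori only a probability measure on $\mathbb{R}^d$, one must first force it onto a compact box, which your even-moment Markov bound $\nu(\{|x_i|>R+\epsilon\})\le (R/(R+\epsilon))^{2k}\to 0$ does cleanly. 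For the convergence claim you replace the appeal to Haviland by the standard compactness argument: tightness of $\{\mu_n\}$ (all supported in $[a,b]^d$), Prokhorov, identification of every subsequential limit via its moments and the determinacy part, and uniqueness of subsequential limits forcing convergence of the full sequence. What each approach buys: the paper's citation route is shorter and leans on a classical result tailored to moment problems, while yours is elementary, verifiable line by line, and makes explicit the logical dependence of the convergence statement on the determinacy statement. One cosmetic point: when you pass moments to the limit along $\mu_{n_k}\rightharpoonup\mu$, the monomial $x^{\mathbf{u}}$ is not bounded on $\mathbb{R}^d$, so strictly you should either regard all measures as measures on the compact box $K$ (legitimate, since all supports lie there) or test against a continuous truncation of $x^{\mathbf{u}}$ that agrees with it on $K$; this is a one-line fix and does not affect the validity of the argument.
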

\begin{proof}[Proof of Theorem \ref{thm:mv_method_moments}]
The first statement follows directly from the first theorem in \citep{Haviland1936OnII}. The second statement can be shown using a similar argument as univariate case in \citep{Kirsch2019AnTheorem} with Weierstrass approximation theorem.
\end{proof}
\textbf{Notation} Let $X_{i;n}$ denotes $i$-th random variable in $n$-th sample. We write $\mathbf{X_n}:= (X_{1;n}, \ldots, X_{d;n})$ and  $X_{d;:} := (X_{d;1}, \dots, X_{d;N})$. Define $UP_{i;:} := (X_{i+1;:}, \ldots, X_{d;:})$, which contains all random variables that have higher variable index value than $i$, i.e. upstream of node $i$.
\begin{definition}[Topological Ordering]
A topological ordering of a DAG is a linear ordering of its nodes such that for every directed edge $X \rightarrow Y$, $X$ comes before $Y$ in the ordering.
We call the ordering is a reversed topological order if we reverse the topological ordering of a DAG.
\end{definition}

Without loss of generality, we reorder the variables according to reversed topological ordering, i.e. a node's parents will be placed after this node. Note a reversed topological ordering is not unique, but it must satisfy a node's descendants will come before itself. 
Then by Kolmogorov's chain rule, we can always write the joint probability distribution as
\begin{equation}
    P(\mathbf{X_1}, \dots, \mathbf{X_N}) = \prod_{i=1}^d \tilde{P}(X_{i;:}|UP_{i;:})
\label{eq:bayesfac_proof}
\end{equation}
Note $P$ and $\tilde{P}$ are not the same, for ease of notation, we use $P$ below in general. For each $X_{i;:}$, we want to show there exists a suitable probability measure $\nu_i$ such that we can write
$P(X_{i;:}|UP_{i;:}) = \int \prod_{n=1}^N p(X_{i;n} | PA_{i;n}, \boldsymbol{\theta_i}) d\nu_i(\boldsymbol{\theta_i})$. Then substitute back into Equation \ref{eq:bayesfac_proof} we will have Causal de Finetti.

\begin{theorem}[Causal Conditional de Finetti] \label{thm:icm_cond_definetti}
Let $\{(X_{i;n}, X_{i+1;n}, \dots X_{d;n})\}_{n \in \mathbb{N}}$ satisfies conditions 1) and 2) in Causal de Finetti. Then there exists a suitable probability measure $\nu$ such that the conditional probability can be written as
\begin{equation}
P(X_{i;:}|UP_{i;:}) = \int \prod_{n=1}^N p(X_{i;n} | PA_{i;n}, \boldsymbol{\theta}) d\nu(\boldsymbol{\theta})
\end{equation}
where $\boldsymbol{\theta}$ is a vector where its index represents a unique combination for $PA_i$ values. We can thus consider $ p(X_{i;n} | PA_{i;n}, \boldsymbol{\theta}) = \pi_{\theta_j}(X_{i;n})$ where $j$ is the index of $PA_{i;n}$ in all possible realizations of $PA_i$ and $\pi_p$ is a Bernoulli probability measure parameterized by $p \in [0, 1]$.  
\end{theorem}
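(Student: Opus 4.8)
The plan is to reduce the statement to a partially-exchangeable version of the classical de Finetti theorem, which is then closed out by the method of moments (Theorem~\ref{thm:mv_method_moments}). First I would use the first half of condition 2 to strip the conditioning set down to the parents. Since the decomposition $X_{i;[n]} \ind \overline{\textbf{ND}}_{i;[n]} \mid \textbf{PA}_{i;[n]}$ holds for every $n$, and in the reversed topological order $UP_{i;:}$ consists precisely of the non-descendant vectors of node $i$ over the first $N$ samples, one obtains $P(X_{i;:} \mid UP_{i;:}) = P(X_{i;:} \mid \textbf{PA}_{i;:})$. Hence it suffices to produce the mixture representation conditional on the parent sequence alone.

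Next I would classify each sample $n$ by the value $\textbf{pa}_{i;n}$ of its parent tuple; there are $2^{|\textbf{PA}_i|}$ possible configurations, indexed by $j$, and the latent vector $\boldsymbol{\theta}$ is meant to carry one Bernoulli parameter $\theta_j$ per configuration. The goal of this step is to show that, conditional on the whole parent sequence, the variables $X_{i;n}$ form a partially exchangeable array: two index sequences inducing the same multiset of parent configurations yield the same conditional law. Overall exchangeability (condition 1) supplies exchangeability of samples, while the second half of condition 2, $X_{i;[n]} \ind \textbf{ND}_{i;n+1} \mid \textbf{PA}_{i;[n]}$, is exactly what guarantees that conditioning additionally on the future parent values, which fix the class memberships of later samples, does not perturb the law of the past $X_{i;n}$. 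This is the statistical encoding of the independent-mechanism property and is what makes the per-configuration parameter well-defined independently of how parent values are distributed.

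With partial exchangeability in hand, I would fix a configuration $j$, pass to the subsequence of samples whose parents equal $j$, and apply the conditional de Finetti argument of \cite{Kirsch2019AnTheorem}: the empirical frequency of $X_{i;n}=1$ along this subsequence converges and defines a latent $\theta_j \in [0,1]$ for which the corresponding $X_{i;n}$ are conditionally i.i.d.\ Bernoulli$(\theta_j)$. Assembling the $\theta_j$ into the vector $\boldsymbol{\theta}$ and the joint law of the convergent frequencies into a single mixing measure $\nu$ on $[0,1]^{2^{|\textbf{PA}_i|}}$, I would verify the claimed identity via moments: for any finite selection of positions the conditional probability that the chosen $X_{i;n}$ all equal $1$ is a mixed moment of $\nu$ in the coordinates $\theta_{j(n)}$, and these moments pin down the measure uniquely by Theorem~\ref{thm:mv_method_moments}. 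Setting $p(X_{i;n}\mid \textbf{PA}_{i;n},\boldsymbol{\theta}) = \pi_{\theta_j}(X_{i;n})$ with $j$ the index of $\textbf{pa}_{i;n}$ then gives the stated factorization.

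The hard part will be step two: converting global exchangeability plus the conditional independence condition into genuine partial exchangeability, since the positions occupied by each parent configuration are themselves random and configuration-dependent. The real content is in combining condition 1 with the second part of condition 2 to show the conditional law depends on each class only through its occupation counts, so that the distinct $\theta_j$ are governed consistently by one measure $\nu$; once that is secured, the passage to the moment representation is routine given Theorem~\ref{thm:mv_method_moments}.
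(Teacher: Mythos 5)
Your proposal follows essentially the same route as the paper's proof: reduce the conditioning set from $UP_{i;:}$ to $\textbf{PA}_{i;:}$ via the decomposition half of condition 2, group samples by parent configuration into a $K$-class partially exchangeable array (the paper's Lemma \ref{lemma:conditional_is_exchangeable}, Corollary \ref{corollary_icm}, and Lemma \ref{lemma:existcondexchseq}, which likewise uses the second half of condition 2 for well-definedness and Kolmogorov consistency of the conditional law), and then construct $\nu$ as the joint law of the limiting per-configuration empirical frequencies, closing the argument by matching mixed moments via Theorem \ref{thm:mv_method_moments}, exactly as in the paper's Definition \ref{def:icm_definetti_measure}, Theorem \ref{theorem:moment_expression}, and the final moment-comparison with the auxiliary mixture process. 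The step you flag as the hard part is precisely what the paper isolates in Lemma \ref{lemma:existcondexchseq}, so your plan and the published proof coincide in both structure and key ingredients.
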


\begin{lemma}
Let $\{(X_{i;n}, X_{i+1;n}, \dots X_{d;n})\}_{n \in \mathbb{N}}$ satisfies conditions 1) and 2) in Causal de Finetti. Then for every permutation $\pi$ of $\{1, 2, ..., N\}$:
\begin{equation}
\begin{split}
& \mathbb{P}(X_{i;1}, ..., X_{i;N}|UP_{i,1}, ..., UP_{i;N}) \\
    = &\mathbb{P}(X_{i;\pi(1)}, ..., X_{i;\pi(N)}|UP_{i;\pi(1)}, ..., UP_{i;\pi(N)})
\end{split}    
\end{equation}
\label{lemma:conditional_is_exchangeable}
\end{lemma}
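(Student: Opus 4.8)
The plan is to reduce the claim to ordinary (block) exchangeability of the relevant sub-sequence and then to push that exchangeability through the definition of conditional probability; notably, only condition 1 is needed here, condition 2 playing no role in this particular lemma.

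First I would establish that the block sequence $W_n := (X_{i;n}, UP_{i;n}) = (X_{i;n}, X_{i+1;n}, \ldots, X_{d;n})$ is itself exchangeable in $n$. This follows from condition 1 alone: the full tuple sequence $(\mathbf{X_n})_{n}$ is infinitely exchangeable, and marginalizing out the same coordinates $X_{1;n}, \ldots, X_{i-1;n}$ across \emph{every} sample commutes with any permutation $\pi$ of the sample index, since $\pi$ relocates entire tuples rather than individual coordinates. Hence $(W_n)_n$, and in particular its coordinate-marginal $(UP_{i;n})_n$, inherits exchangeability.

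Next I would write each side of the claimed identity as a ratio via the definition of conditional probability. Because the variables are binary, all relevant events are discrete, so for any configuration $x_1,\dots,x_N,u_1,\dots,u_N$ with $P(UP_{i;n}=u_n\ \forall n)>0$,
\[
P\big(X_{i;\pi(n)}=x_n\ \forall n \,\big|\, UP_{i;\pi(n)}=u_n\ \forall n\big)=\frac{P\big(X_{i;\pi(n)}=x_n,\,UP_{i;\pi(n)}=u_n\ \forall n\big)}{P\big(UP_{i;\pi(n)}=u_n\ \forall n\big)}.
\]
Reindexing each intersection by the substitution $m=\pi(n)$ rewrites the permuted events as the unpermuted events evaluated at the relabeled pattern $(x_{\pi^{-1}(m)},u_{\pi^{-1}(m)})$. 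Applying exchangeability of $(W_n)_n$ to the numerator and exchangeability of $(UP_{i;n})_n$ to the denominator then collapses the relabeling, recovering exactly the ratio that defines $P(X_{i;n}=x_n\ \forall n \mid UP_{i;n}=u_n\ \forall n)$, which is the left-hand side.

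The only point that requires genuine care, and the main obstacle, is the treatment of conditioning events of probability zero: where the denominator vanishes the conditional is undefined, but since $\pi$ leaves the marginal law of $(UP_{i;n})_n$ invariant, the exceptional configurations coincide on both sides, so the stated equality holds as an identity of conditional distributions (almost surely with respect to $UP_{i;:}$). Everything else is bookkeeping over finite discrete sums.
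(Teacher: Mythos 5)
Your proposal is correct and follows essentially the same route as the paper's own proof: both reduce the claim to exchangeability of the marginal sequences $(X_{i;n}, UP_{i;n})_n$ and $(UP_{i;n})_n$ (obtained by marginalizing the full exchangeable tuple sequence, using condition 1 only), then write the conditional as a ratio and apply exchangeability to numerator and denominator separately. Your added care about zero-probability conditioning events is a minor refinement the paper glosses over, but it does not change the substance of the argument.
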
 

\begin{proof}[Proof of Lemma \ref{lemma:conditional_is_exchangeable}]
Since $\{(X_{i;n}, X_{i+1;n}, \dots X_{d;n})\}_{n \in \mathbb{N}}$ is exchangeable, it is clear by marginalizing $\{X_{i;n}\}_{n\in\mathbb{N}}$ from definition we have  $\{UP_{i;n}\}_{n\in\mathbb{N}}$ is also exchangeable. 
\begin{equation*}
\begin{split}
    &  \mathbb{P}(X_{i;1}, ..., X_{i;N}|UP_{i,1}, ..., UP_{i;N})\\
    = &\frac{\mathbb{P}(X_{i;1},UP_{i,1}, ..., X_{i;N}, UP_{i;N})}{\mathbb{P}(UP_{i,1}, ..., UP_{i;N})} \\
    = & \frac{\mathbb{P}(X_{i;\pi(1)}, UP_{i;\pi(1)}, ..., X_{i;\pi(N)}, UP_{i;\pi(N)})}{\mathbb{P}(UP_{i;\pi(1)} , ..., UP_{i;\pi(N)})} \\
    = &\mathbb{P}(X_{i;\pi(1)}, ..., X_{i;\pi(N)}|UP_{i;\pi(1)} , ..., UP_{i;\pi(N)})
\end{split}    
\end{equation*}

Note since we reorder the index of multivariate random variables according to reversed topological ordering, we have $PA_{i;n} \subseteq UP_{i;n}$, so given $UP_{i;n}$ we would know $PA_{i;n}$.
This lemma implies for every conditional distribution we can always choose a permutation such that we can group values with identical $PA_i$'s realizations together. For example, when $|PA_i| = 1$, then we can permute such that all observations with $PA_{i;n} = 0 $ comes first and observations with $PA_{i;n} = 1$ come second. Let's order all possible realizations of $PA_i$ into a list of length $K:=2^{|PA_i|}$ and index each realization. Then from observations we have $N_k$ pairs which have $PA_i$ takes values as the index $k$'s realization.  Here we assume we observe enough samples to see every realization of $PA_i$. This is possible because $K$ is finite.
Then we can rearrange such that
\begin{equation*}
\begin{split}
    &  \mathbb{P}(X_{i;1}, ..., X_{i;N}|UP_{i,1}, ..., UP_{i;N}) \\
    = &P(\{\{X^k_{i;n}\}_{n=1}^{N_k}\}_{k=1}^K|\{\{\UPink \}_{n=1}^{N_k}\}_{k=1}^K ) \\
\end{split}
\end{equation*}
where $X^k_{i;n}$ denotes that its parents $PA_{i;n}$ takes realizations the same as index $k$ indicates and $\UPink $ denotes that the random vector $UP_{i;n}$ contains $PA_i$ which takes realizations the same as index $k$ indicates.
\end{proof}

\begin{corollary} \label{corollary_icm}
For any $K$-tuple permutations $(\pi_1, \pi_2, \ldots, \pi_K)$ where $\pi_k$ permutes $\{1, ..., N_k\}$:
\begin{equation*}
    \begin{split}
       &P(\{\{X^k_{i;n}\}_{n=1}^{N_k}\}_{k=1}^K|\{\{UP^k_{i;n}\}_{n=1}^{N_k}\}_{k=1}^K )\\
       = &P(\{\{X^k_{i;\pi_k(n)}\}_{n=1}^{N_k}\}_{k=1}^K |\{\{UP^{k}_{i;\pi_k(n)}\}_{n=1}^{N_k}\}_{k=1}^K )
    \end{split}
\end{equation*}
\end{corollary}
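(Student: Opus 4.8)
The plan is to reduce the Corollary directly to Lemma \ref{lemma:conditional_is_exchangeable} by recognizing a $K$-tuple of within-group permutations as a single global permutation of $\{1, \ldots, N\}$ that happens to fix each parent-configuration group setwise. Concretely, let $I_k \subseteq \{1, \ldots, N\}$ collect the global sample indices whose parent realization $PA_{i;n}$ matches the $k$-th entry of the ordered list of configurations, so that $|I_k| = N_k$ and the blocks $I_k$ partition $\{1, \ldots, N\}$. Fixing an enumeration of each block gives the grouped labels $X^k_{i;n}$ and $\UPink$. A tuple $(\pi_1, \ldots, \pi_K)$, with each $\pi_k$ permuting $\{1, \ldots, N_k\}$, then determines a unique permutation $\pi$ of $\{1, \ldots, N\}$ that maps each $I_k$ to itself and acts as $\pi_k$ on it; group-theoretically this is just the standard embedding of $S_{N_1} \times \cdots \times S_{N_K}$ into $S_N$ as the subgroup of block-preserving permutations.

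First I would apply Lemma \ref{lemma:conditional_is_exchangeable} to this particular block-preserving $\pi$, which immediately yields
$$\mathbb{P}(X_{i;1}, \ldots, X_{i;N} \mid UP_{i;1}, \ldots, UP_{i;N}) = \mathbb{P}(X_{i;\pi(1)}, \ldots, X_{i;\pi(N)} \mid UP_{i;\pi(1)}, \ldots, UP_{i;\pi(N)}).$$
The Lemma is stated for \emph{every} global permutation, so it holds in particular for every block-preserving one; no new probabilistic input is needed here.

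Next I would re-express both sides in the grouped notation from the proof of the Lemma. Because $\pi$ preserves each block $I_k$, it never moves a sample out of its parent-configuration group, so the labels $k$ and the counts $N_k$ are untouched and the only effect of $\pi$ on the grouped expression is to relabel the within-group index $n$ in block $k$ by the corresponding within-group permutation. Substituting, the right-hand side becomes exactly
$$P(\{\{X^k_{i;\pi_k(n)}\}_{n=1}^{N_k}\}_{k=1}^K \mid \{\{UP^{k}_{i;\pi_k(n)}\}_{n=1}^{N_k}\}_{k=1}^K),$$
which is the claimed identity (and since the statement quantifies over \emph{all} such tuples, the orientation of each $\pi_k$ is immaterial, the set being closed under inversion).

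The only real care needed — and the single place where a loosely stated argument could go wrong — is the bookkeeping tying block-preserving global permutations to tuples of within-group permutations: one must check that the regrouping map commutes with $\pi$ and that the parent-configuration labels are genuinely invariant under any block-preserving $\pi$. The latter holds precisely because the reversed topological ordering gives $PA_{i;n} \subseteq UP_{i;n}$, so the group index of each sample is a deterministic function of its conditioning vector $UP_{i;n}$, and $\pi$ moves each conditioning vector only within its own block. Everything else is a direct specialization of the already-established Lemma.
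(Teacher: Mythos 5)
Your proposal is correct and takes essentially the same approach as the paper: the paper's entire proof is that the corollary ``follows directly from Lemma \ref{lemma:conditional_is_exchangeable}'', and your argument is precisely the explicit form of that reduction, embedding the tuple $(\pi_1,\ldots,\pi_K)$ as a block-preserving global permutation and applying the lemma. The bookkeeping you flag (that the group labels are functions of the conditioning vectors via $PA_{i;n} \subseteq UP_{i;n}$, so blocks are preserved) is exactly the point the paper establishes at the end of the proof of Lemma \ref{lemma:conditional_is_exchangeable} and then takes for granted.
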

\begin{proof}[Proof of Corollary \ref{corollary_icm}]
Follows directly from Lemma 1.
\end{proof}

\begin{lemma} \label{cond2equiv}
Recall condition 2) in Causal de Finetti states that $\forall i, \forall n \in \mathbb{N}$:
    $$X_{i;[n]} \ind \overline{ND}_{i;[n]}, ND_{i;n+1} | PA_{i;[n]}$$
By exchangeability, it is equivalent to $$X_{i;I} \ind \overline{ND}_{i;I}, ND_{i;m} | PA_{i;I}$$ where $I$ is any set and $m\not \in I$. 
\end{lemma}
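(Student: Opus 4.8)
The plan is to prove the two implications separately, with the reverse direction being immediate and the forward direction resting on the invariance of finite-dimensional distributions under relabelling of the sample index. For the ($\Leftarrow$) direction, specializing the general relation to the index set $I = [n]$ and the index $m = n+1$ reproduces condition 2) verbatim, so nothing needs to be shown.

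For the forward direction, fix a finite index set $I$ with $|I| = n$ and an index $m \notin I$. The asserted relation $X_{i;I} \ind \overline{ND}_{i;I}, ND_{i;m} \mid PA_{i;I}$ references only the $n+1$ samples $\{\mathbf{X_k}\}_{k \in I \cup \{m\}}$, so its validity is a property of the joint law of these tuples alone. The key observation is that infinite exchangeability is an invariance under permutations of the \emph{sample} index, relabelling each entire tuple $\mathbf{X_n} = (X_{1;n}, \ldots, X_{d;n})$ at once. Hence a single permutation $\pi$ simultaneously and consistently relabels a variable $X_{i;n}$ together with its parents $PA_{i;n}$, its non-descendants $ND_{i;n}$, and the reduced set $\overline{ND}_{i;n}$: because the graph $\mathcal{G}$ is fixed and the $ICM$ unrolling attaches the same local structure to every sample, these graphical roles transform covariantly with the index and are never disturbed by $\pi$.

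The steps are then, in order: (i) choose a permutation $\pi$ of $\mathbb{N}$ with $\pi([n]) = I$ and $\pi(n+1) = m$, which exists precisely because $|I| = n$ and $m \notin I$, extending it arbitrarily outside these indices; (ii) invoke infinite exchangeability to conclude that $(\mathbf{X_{\pi(1)}}, \ldots, \mathbf{X_{\pi(N)}})$ and $(\mathbf{X_1}, \ldots, \mathbf{X_N})$ share the same joint law for every $N$, so in particular the law over indices $I \cup \{m\}$ matches the law over $[n] \cup \{n+1\}$ under this relabelling; (iii) since conditional independence is a function of the joint law, transport the hypothesized relation $X_{i;[n]} \ind \overline{ND}_{i;[n]}, ND_{i;n+1} \mid PA_{i;[n]}$ through $\pi$ to obtain exactly $X_{i;I} \ind \overline{ND}_{i;I}, ND_{i;m} \mid PA_{i;I}$. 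Note that condition 2) is assumed for every $n \in \mathbb{N}$, so the instance matching $|I| = n$ is always available.

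The main obstacle is bookkeeping rather than conceptual: one must verify that permuting sample indices preserves the parent/non-descendant partition that defines the conditioning set, which holds because each sample carries an identical copy of $\mathcal{G}$ and $\pi$ acts only on the sample coordinate. A secondary point to confirm is that restricting to finite $I$ costs no generality, which is automatic in the binary (and categorical) setting where every conditional independence assertion reduces to one over finitely many coordinates.
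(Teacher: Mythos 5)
Your proof is correct. The paper in fact states this lemma with no proof at all---its entire justification is the phrase ``by exchangeability''---and your argument (pick a finite permutation sending $[n]$ to $I$ and $n+1$ to $m$, note that the graphical roles $PA_i$, $ND_i$, $\overline{ND}_i$ transform covariantly because every sample carries an identical copy of $\mathcal{G}$, and use the fact that conditional independence is a property of the joint law) is exactly the standard fleshing-out of that one-line assertion, so there is nothing to contrast.
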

\begin{lemma}
Let \vecXin  satisfies conditions 1) and 2) in Causal de Finetti. There exists $K$-infinitely exchangeable sequence $\{\{ \Xinkstar \}_{n\in \mathbb{N}}\}_{k = 1}^K$ such that for every $N_k \in \mathbb{N}, \forall k$ we have:
\begin{equation}
\begin{split}
    &P( \{\{ \Xinkstar\}_{n=1}^{N_k}\}_{k=1}^K) \\
    & = P( \{\{\Xink\}_{n=1}^{N_k}\}_{k=1}^K|\{\{ \UPink \}_{n=1}^{N_k}\}_{k=1}^K )
\end{split}
\end{equation}
where $k$ is the index for $PA_i$'s particular realization.
\label{lemma:existcondexchseq}
\end{lemma}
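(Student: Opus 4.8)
The plan is to realize the right-hand conditional distribution as the law of a freshly constructed process by invoking the Kolmogorov extension theorem, after checking that this conditional is in fact a consistent, within-group exchangeable family of finite-dimensional distributions. First I would record what Lemma \ref{lemma:conditional_is_exchangeable} and Corollary \ref{corollary_icm} already give us: after reindexing by reversed topological order and regrouping samples by the realization of $PA_i$, the conditional on the right is invariant under any $K$-tuple of within-group permutations $(\pi_1, \dots, \pi_K)$. Thus each group $k$ is exchangeable in its sample index $n$, which is precisely the within-group symmetry the target sequence $\{\{\Xinkstar\}_{n}\}_{k}$ must possess.

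The crucial reduction is to show that this conditional does not really depend on the conditioning values beyond the grouping itself. Here I would use condition 2 in the form of Lemma \ref{cond2equiv}: since $UP_{i;n}$ consists of $PA_{i;n}$ together with other non-descendants of node $i$, and $X_{i;I} \ind \overline{ND}_{i;I}, ND_{i;m} \mid PA_{i;I}$, conditioning on the full upstream vector $UP_i$ is equivalent, for the distribution of the $X_i$'s, to conditioning on the parent realizations alone. Because every sample placed in group $k$ has its parents fixed to the $k$-th realization, the conditional probability becomes a fixed function of the group sizes $(N_1, \dots, N_K)$ and the group labels, with no residual dependence on the particular values taken by the non-parent coordinates of $UP_i$. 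This is what lets the conditional on the right be read as an honest marginal law of new variables.

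With the conditional identified as a family $\{Q_{N_1, \dots, N_K}\}$ of finite-dimensional distributions, I would verify Kolmogorov consistency: marginalizing the last sample of any group $k$ (decreasing $N_k$ by one) returns $Q_{\dots, N_k-1, \dots}$. This follows from the consistency of the original infinite exchangeable sequence under dropping a coordinate, combined with the group-label-only dependence established above; within-group exchangeability from Corollary \ref{corollary_icm} guarantees the marginal does not depend on which sample is dropped. I then apply the Kolmogorov extension theorem to this consistent, symmetric projective family to obtain a process $\{\{\Xinkstar\}_{n \in \mathbb{N}}\}_{k=1}^K$ whose finite-dimensional laws are exactly the $Q_{N_1, \dots, N_K}$; each of its $K$ component sequences is then infinitely exchangeable, so the whole object is $K$-infinitely exchangeable and, by construction, its law matches the right-hand conditional.

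The main obstacle I anticipate is the reduction step: making precise, at the level of regular conditional probabilities, that conditioning on $UP_i$ collapses to conditioning on $PA_i$ and that the resulting quantity is genuinely constant across configurations sharing the same parent grouping. A related technical point is guaranteeing that arbitrary group sizes $N_k$ are attainable, i.e. that each parent realization recurs along the infinite sequence so that the projective family is defined for all $(N_1, \dots, N_K)$; this is where the finiteness of $K$ and the standing assumption that every realization of $PA_i$ is eventually observed are used. Once these are pinned down, the extension and the exchangeability of each group follow routinely.
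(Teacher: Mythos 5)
Your proposal follows essentially the same route as the paper's proof: the reduction from conditioning on $UP_i$ to conditioning on $PA_i$ via Lemma \ref{cond2equiv} is exactly the paper's ``well-definedness'' step, your marginalization check is its ``consistency'' step (which the paper carries out by adding $UP^k_{i;N_k+1}$ to the conditioning set using condition 2 and then summing out $X^k_{i;N_k+1}$), and within-group exchangeability is obtained from Corollary \ref{corollary_icm} in both arguments. The only cosmetic difference is that you invoke the Kolmogorov extension theorem explicitly, whereas the paper leaves that final existence step implicit after establishing well-definedness and consistency.
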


\begin{proof}[Proof of Lemma \ref{lemma:existcondexchseq}]
To show such sequence exists, we need to show it is well-defined and the inductive defining sequence is consistent. \\
To show it is well-defined: 
\begin{equation*}
\begin{split}
& P( \{\{\Xink\}_{n=1}^{N_k}\}_{k=1}^K|\{\{\UPink\}\}_{n=1}^{N_k}\}_{k=1}^K ) \\
&= P( \{\{\Xink\}_{n=1}^{N_k}\}_{k=1}^K|\{\{PA^k_{i;n}\}\}_{n=1}^{N_k}\}_{k=1}^K ) \\
&= P(\{\{ \Xinkstar\}_{n=1}^{N_k}\}_{k=1}^K)
\end{split}
\end{equation*}

Using Lemma \ref{cond2equiv}, let $I = \{\{(k;n)\}_{n=1}^{N_k}\}_{k=1}^K$ and we have $\overline{UP}_i \subseteq \overline{ND}_i$ since any particular reversed topological sort will place node $i$'s decendants before itself. Then condition 2) implies $X_{i;I} \ind \overline{UP}_{i;I}| PA_{i;I}$ by decomposition rule in conditional independence. Because index $k$ already characterizes the value of $PA_i$ so result follows by definition.\\
consistent: We write $\{\{\cdot\}_{n=1}^{N_k}\}_{k=1}^K$ as $\{\{\cdot\}\}$ for abbreviation. For any $k$, consider
\begin{align*}
     &P( \{\{ \Xinkstar\}\})\\
     =&P( \{\{\Xink \}\}|\{\{\UPink\}\}) \\
     =&P( \{\{\Xink\}\}|\{\{\UPink\}\}, UP_{i;N_k + 1}^k)\\
     =&\sum_{X^i_{k;N_k+1}} P(\{\{ \Xink \}\}, X^k_{i;N_k+1}|\{\{UP_{i;n}^k\}\}, UP^k_{i;N_k + 1})\\
     =&\sum_{X^{i, *}_{k;N_k+1} = 0}^1 P(\{\{\Xinkstar\}\}, X^{k, *}_{i;N_k+1})
\end{align*}
The first equality holds by well-defindedness. 
Let $I = \{\{(k;n)\}_{n=1}^{N_k}\}_{k=1}^K$ and $m = (k;N_k+1)$. Note $ \overline{UP}_i \subseteq \overline{ND}_i$. Lemma \ref{cond2equiv} implies the second equality holds. The third equality holds by marginal property of probability distribution. The fourth equality follow from well-definedness. Infinite exchangeability of $\{\Xinkstar\}_{n\in \mathbb{N}}, \forall k$  follows from Corollary \ref{corollary_icm}. 
\end{proof}

\begin{definition}[Causal Conditional de Finetti measure] \label{def:icm_definetti_measure}
Using the notation introduced in Lemma \ref{lemma:existcondexchseq}, we define a random vector $\boldsymbol{Q}$ where $Q_k = \frac{1}{N_k}\sum_{n=1}^{N_k} \Xinkstar$. Let the joint distribution of $\boldsymbol{Q}$ be $\nu_{N_1;\ldots;N_K}$ or in shorthand $\nu_{\boldsymbol{N}}$ where $\boldsymbol{N}:= [N_1, ..., N_K]$.  If $\nu_{\boldsymbol{N}}$ converges to a probability measure $\nu$ as $N_k \rightarrow \infty, \forall k$, we call $\nu$ the Causal conditional de Finetti measure.  
\end{definition}

\begin{lemma}\label{lemma:unique_idx_def}
        Let $\{A_i\}_{i \in \mathbb{N}}$ be an infinitely exchangeable random binary process. Given a list of indices $\{i_1, \ldots, i_n\}$, let us denote the number of unique elements with $\rho(i_1, \ldots, i_n)$. For every arbitrary list of indices, the following holds:
    \begin{equation}
       \mathbb{E}[A_{i_1}A_{i_2} \ldots A_{i_n}] = \mathbb{E}[A_{1}A_{2} \ldots A_{\rho(i_1, \ldots, i_n)}] 
    \end{equation}
\end{lemma}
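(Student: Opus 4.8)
The plan is to exploit two elementary facts: the idempotence of binary random variables, and the permutation invariance granted by exchangeability. These combine to reduce an arbitrary product of (possibly repeated) factors to a product of the first $\rho$ variables.

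First I would observe that because each $A_i$ takes values in $\{0,1\}$, we have $A_i^k = A_i$ for every $k \ge 1$, so repeated factors in the product collapse pointwise. Concretely, let $\{j_1, \ldots, j_\rho\}$ be the set of distinct values appearing among $i_1, \ldots, i_n$, so that $\rho = \rho(i_1, \ldots, i_n)$. Grouping equal indices in the product and applying $A_i^k = A_i$ to each group eliminates all duplicates, so that on every sample path
\[
A_{i_1} A_{i_2} \cdots A_{i_n} = A_{j_1} A_{j_2} \cdots A_{j_\rho}.
\]
Taking expectations gives $\mathbb{E}[A_{i_1}\cdots A_{i_n}] = \mathbb{E}[A_{j_1}\cdots A_{j_\rho}]$, which disposes of the repetitions entirely.

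Second, I would relabel the distinct indices using exchangeability. Order them as $j_1 < \cdots < j_\rho$ and set $N = j_\rho$. Choose a permutation $\pi$ of $\{1, \ldots, N\}$ satisfying $\pi(m) = j_m$ for $m = 1, \ldots, \rho$. By the exchangeability of $\{A_i\}_{i \in \mathbb{N}}$, the joint law of $(A_{\pi(1)}, \ldots, A_{\pi(N)})$ equals that of $(A_1, \ldots, A_N)$; marginalizing out the coordinates indexed outside $\{1,\ldots,\rho\}$ then shows that $(A_{j_1}, \ldots, A_{j_\rho})$ has the same distribution as $(A_1, \ldots, A_\rho)$, whence $\mathbb{E}[A_{j_1}\cdots A_{j_\rho}] = \mathbb{E}[A_1 \cdots A_\rho]$. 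Chaining this with the previous display yields the claim.

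There is no serious obstacle here; the lemma is genuinely elementary. The only point requiring a little care is that exchangeability is stated for permutations of a finite initial segment $\{1, \ldots, N\}$, so one must take $N$ large enough to contain every distinct index and then marginalize, rather than attempting to permute directly on the $\rho$ chosen positions. Everything else is bookkeeping, and both ingredients (idempotence of binary variables and permutation invariance) are used exactly once.
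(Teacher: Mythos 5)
Your proof is correct and follows essentially the same route as the paper's: collapse repeated factors via the pointwise idempotence $A_i^k = A_i$ of binary variables, then invoke exchangeability to relabel the $\rho$ distinct indices as $1, \ldots, \rho$. Your write-up is in fact slightly more careful than the paper's terse version, since you work with the pointwise identity (rather than just $\mathbb{E}[A_i^l] = \mathbb{E}[A_i]$) and you handle the relabeling by permuting a sufficiently large initial segment and marginalizing, but these are refinements of the same argument, not a different one.
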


\begin{proof}[Proof of Lemma \ref{lemma:unique_idx_def}]
For binary variables, we have $\mathbb{E}[A_i^l] = \mathbb{E}[A_i], \forall l \geq 1$. So the product from left hand side is a product of $\rho(i_1, \ldots, i_n)$ different $A_i$'s. Due to exchangeability,  we have right hand side. 
\end{proof}

\begin{lemma} \label{lemma:unique_idx_icm}
Let $\{\{A_{k;i}\}_{i \in \mathbb{N}}\}_{k=1}^K$  be $K$ infinitely joint exchangeable random binary processes. For every $K$ arbitrary list of indices $\{\{\mathbf{i_k}\}\}_{k=1}^K$, where $\mathbf{i_k} := (i_{k;1}, \ldots, i_{k;N_k})$ denotes the sequence of indices selected for $k$-th process and $N_k$ is the number of indices. the following holds:
    \begin{equation}
    \begin{split}
      & \mathbb{E}[\prod_{k=1}^K \prod_{i \in \mathbf{i_k}} A_{k;i}] \\
      = & \mathbb{E}[\prod_{k=1}^K \prod_{i=1}^{\rho(\mathbf{i_k})} A_{k;i}] 
      \end{split}
    \end{equation}
\end{lemma}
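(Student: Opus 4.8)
The plan is to reproduce, block-by-block, the same two-step argument that established Lemma \ref{lemma:unique_idx_def}, and then to combine the blocks using the joint exchangeability assumed across the $K$ processes. The two ingredients are (i) the fact that binary variables idempotently absorb repeated indices, and (ii) permutation invariance, which here takes the stronger form that each of the $K$ blocks may be permuted by its \emph{own} permutation without changing the joint law.

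First I would exploit that each $A_{k;i}$ is binary. Since $A_{k;i}\in\{0,1\}$ gives $A_{k;i}^{\,l}=A_{k;i}$ for every $l\ge 1$, repeated occurrences of the same index inside a single block $k$ collapse: the product $\prod_{i\in\mathbf{i_k}}A_{k;i}$ equals the product of the $\rho(\mathbf{i_k})$ \emph{distinct} factors $A_{k;j^{(k)}_1},\ldots,A_{k;j^{(k)}_{\rho(\mathbf{i_k})}}$, where $j^{(k)}_1,\ldots,j^{(k)}_{\rho(\mathbf{i_k})}$ enumerate the distinct indices appearing in $\mathbf{i_k}$. Performing this reduction simultaneously in every block rewrites the left-hand expectation as $\mathbb{E}\big[\prod_{k=1}^K\prod_{l=1}^{\rho(\mathbf{i_k})}A_{k;j^{(k)}_l}\big]$, a product with no repeated index inside any block. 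Next I would invoke the joint exchangeability in the form recorded by Corollary \ref{corollary_icm}: the joint law of $\{\{A_{k;i}\}\}$ is invariant under applying an independent permutation $\pi_k$ to the indices of each block $k$. Choosing, for each $k$ separately, a permutation that sends the distinct indices $j^{(k)}_1,\ldots,j^{(k)}_{\rho(\mathbf{i_k})}$ to $1,2,\ldots,\rho(\mathbf{i_k})$ leaves the expectation invariant and relabels the surviving factors of every block to the initial-segment form $A_{k;1},\ldots,A_{k;\rho(\mathbf{i_k})}$. This delivers exactly the right-hand side.

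The only point requiring care, and the natural candidate for the main obstacle, is that the relabelings must be carried out independently in each block; this is precisely what the \emph{joint} exchangeability (independent permutations $\pi_1,\ldots,\pi_K$) guarantees, whereas ordinary exchangeability of the pooled sequence would not suffice. Once that property is in hand as in Corollary \ref{corollary_icm}, both the binary collapse of the first step and the finiteness of the index lists are routine, so the argument closes immediately.
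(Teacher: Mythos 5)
Your proof is correct and takes essentially the same approach as the paper's: pointwise idempotence of binary variables collapses repeated indices within each block, and per-block (joint) exchangeability then relabels the surviving distinct indices to the initial segments $1,\ldots,\rho(\mathbf{i_k})$, which is precisely the paper's one-line argument of independently performing Lemma \ref{lemma:unique_idx_def} in each of the $K$ blocks. One aside in your last paragraph is backwards, though: exchangeability of the \emph{pooled} sequence is a strictly stronger property that would also suffice (it contains all per-block permutations); it is simply not what is assumed here, since the blocks generally have different marginal laws.
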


\begin{proof}
Since the two sequences are jointly exchangeable, for $K$ sets of indices, we can independently perform the argument in Lemma \ref{lemma:unique_idx_def}. Hence only the number of unique indices in each set matters, which is the same on both sides. 
\end{proof}

\begin{theorem}
\label{theorem:moment_expression}
If we allow $N_k \rightarrow \infty, \forall k$, the probability measure $\nu_{\boldsymbol{N}}$ converges to a probability distribution $\nu$. The measure $\nu$ has the following joint moments:
$$
m_{\boldsymbol{u}}(\nu) = \mathbb{E}[\prod_{k = 1}^K \prod_{n=1}^{u_k} \Xinkstar]
$$
\end{theorem}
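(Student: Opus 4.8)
The plan is to compute the mixed moments of $\nu_{\boldsymbol{N}}$ in closed form, show that they converge as all $N_k \to \infty$ to the stated expression, and then invoke the multivariate method of moments (Theorem \ref{thm:mv_method_moments}) to deduce simultaneously that $\nu_{\boldsymbol{N}}$ converges weakly to a unique limit $\nu$ and that the moments of $\nu$ are exactly these limits. Since each $Q_k = \frac{1}{N_k}\sum_{n=1}^{N_k} X_{i;n}^{k,*}$ lies in $[0,1]$, every $\nu_{\boldsymbol{N}}$ is supported in the fixed cube $[0,1]^K$, so all moments exist and Theorem \ref{thm:mv_method_moments} applies once moment convergence is established.

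First I would expand the mixed moment. By definition of $Q_k$ and multilinearity of expectation,
\[
m_{\boldsymbol{u}}(\nu_{\boldsymbol{N}}) = \mathbb{E}\Bigl[\prod_{k=1}^K Q_k^{u_k}\Bigr]
= \frac{1}{\prod_{k=1}^K N_k^{u_k}} \sum \mathbb{E}\Bigl[\prod_{k=1}^K \prod_{j=1}^{u_k} X_{i;n_{k,j}}^{k,*}\Bigr],
\]
where the sum ranges over all choices of index tuples $(n_{k,1},\dots,n_{k,u_k}) \in \{1,\dots,N_k\}^{u_k}$ for each $k$. Next I would apply Lemma \ref{lemma:unique_idx_icm}: each summand's expectation depends only on the number $\rho_k$ of \emph{distinct} indices appearing in the $k$-th tuple, and equals $\mathbb{E}[\prod_{k=1}^K \prod_{n=1}^{\rho_k} X_{i;n}^{k,*}]$. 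Grouping the sum by $(\rho_1,\dots,\rho_K)$, the number of joint tuples with exactly $\rho_k$ distinct values in block $k$ factorizes over $k$ as $\prod_{k=1}^K S(u_k,\rho_k)\,(N_k)_{\rho_k}$, where $S(u_k,\rho_k)$ counts the set partitions of $\{1,\dots,u_k\}$ into $\rho_k$ nonempty blocks and $(N_k)_{\rho_k} = N_k(N_k-1)\cdots(N_k-\rho_k+1)$ is the falling factorial counting the distinct-value assignments to the blocks. This yields
\[
m_{\boldsymbol{u}}(\nu_{\boldsymbol{N}}) = \sum_{\rho_1,\dots,\rho_K} \Bigl(\prod_{k=1}^K \frac{S(u_k,\rho_k)\,(N_k)_{\rho_k}}{N_k^{u_k}}\Bigr)\, \mathbb{E}\Bigl[\prod_{k=1}^K \prod_{n=1}^{\rho_k} X_{i;n}^{k,*}\Bigr].
\]

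The key observation is that $(N_k)_{\rho_k}$ is a polynomial in $N_k$ of degree $\rho_k \le u_k$, so $\frac{(N_k)_{\rho_k}}{N_k^{u_k}} \to 0$ whenever $\rho_k < u_k$, while for $\rho_k = u_k$ one has $S(u_k,u_k)=1$ and $\frac{(N_k)_{u_k}}{N_k^{u_k}} \to 1$. Hence every term in which some $\rho_k < u_k$ is suppressed in the limit and only the fully-distinct term survives, giving $\lim m_{\boldsymbol{u}}(\nu_{\boldsymbol{N}}) = \mathbb{E}[\prod_{k=1}^K \prod_{n=1}^{u_k} X_{i;n}^{k,*}]$. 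Since all moments converge and the $\nu_{\boldsymbol{N}}$ share the support $[0,1]^K$, Theorem \ref{thm:mv_method_moments} delivers weak convergence to a uniquely determined $\nu$ supported in $[0,1]^K$ whose moments are precisely these limits, which is the claim.

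The main obstacle is the combinatorial bookkeeping in the middle step: one must carefully justify, using Lemma \ref{lemma:unique_idx_icm}, that the expectation of each summand collapses to a function of the distinct-index counts $(\rho_1,\dots,\rho_K)$ alone, and that the tuple count factorizes across the $K$ blocks with the stated falling-factorial orders, so that after dividing by $\prod_k N_k^{u_k}$ all contributions with fewer than $u_k$ distinct indices vanish in the joint limit. Once this order-counting is in place, the convergence and the identification of $\nu$ follow immediately from the method of moments.
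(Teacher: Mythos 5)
Your proof is correct and follows the same overall strategy as the paper's: expand the mixed moments of $\nu_{\boldsymbol{N}}$, use Lemma \ref{lemma:unique_idx_icm} to reduce each summand to an expectation that depends only on the number of distinct indices per block, show that the terms with deficient index counts vanish under the normalization $\prod_k N_k^{-u_k}$, and invoke Theorem \ref{thm:mv_method_moments} to get both weak convergence and the identification of the limiting moments. Where you differ is in the bookkeeping, and your version is tighter on two points. First, the paper passes the limit inside the expectation via dominated convergence and then manipulates random sums; you instead observe that $m_{\boldsymbol{u}}(\nu_{\boldsymbol{N}})$ is a finite linear combination of fixed expectations with deterministic $N$-dependent coefficients, so the limit is a purely algebraic computation on the coefficients and no convergence theorem for integrals is needed. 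Second, and more substantively, your exact count of tuples with $\rho_k$ distinct indices, namely $S(u_k,\rho_k)\,(N_k)_{\rho_k}$ with $S(u_k,u_k)=1$ and $(N_k)_{u_k}/N_k^{u_k}\to 1$, repairs a genuine sloppiness in the paper: there the surviving term is counted as ${N_k \choose u_k}$, which counts unordered index sets rather than ordered tuples, so the ratio ${N_k \choose u_k}/N_k^{u_k}$ converges to $1/u_k!$ rather than $1$, and the paper's remark ``without loss of generality, consider the constant to be $1$'' is not justified as stated; the correct count is the falling factorial $(N_k)_{u_k} = {N_k \choose u_k}\,u_k!$, exactly as you have it. So your route buys a cleaner limit argument and a correct normalization, at the modest cost of introducing Stirling-number notation that the paper avoids.
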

\begin{proof}
We will first examine the mixed moments of $\nu_{\boldsymbol{N}}$. 

\begin{equation*}
\begin{split}
    &\lim_{\boldsymbol{N} \rightarrow \infty} m_{\boldsymbol{u}}(\nu_{\boldsymbol{N}}) \\
    &= \mathbb{E}\Big[ \lim_{\boldsymbol{N} \rightarrow \infty} \Big( \prod_{k=1}^K \frac{1}{N_k^{u_k}}\big(\sum_{n=1}^{N_k} \Xinkstar \big)^{u_k}\Big) \Big]\\
    &= \mathbb{E}\Big[\lim_{N_k \rightarrow \infty} \frac{1}{N_k^{u_k}} \big( \sum_{\mathbf{i_k}} \prod_{n \in \mathbf{i_k}} \Xinkstar \big)\\
    &\quad \quad \prod_{j \neq k} \lim_{N_j \rightarrow \infty} 
    \Big(\frac{1}{N_j^{u_j}}\big(\sum_{n=1}^{N_j} X^{i,*}_{j;n}\big)^{u_j}\Big) \Big] \\
\end{split}
\end{equation*}
The second equality is possible by Lebesgue dominated convergence theorem and we have $|Q_k| \leq 1 , \forall k, N_k$.  The third equality is due to the product of limits is the limit of products. 
Next for any $k$, we focus on understanding each individual limit. 
\begin{equation*}
    \begin{split}
        &\lim_{N_k \rightarrow \infty} \Big(\frac{1}{N_k^{u_k}} \big( \sum_{\mathbf{i_k}} \prod_{n \in \mathbf{i_k}} \Xinkstar \big)\Big) \\
        &= \lim_{N_k \rightarrow \infty}  \frac{1}{N_k^{u_k}} \big( \sum_{\mathbf{i_k}:\rho(\mathbf{i_k}) < u_k} \prod_{n \in \mathbf{i_k}} \Xinkstar \big) \\
        & + \lim_{N_k \rightarrow \infty}  \frac{1}{N_k^{u_k}} \big( \sum_{\mathbf{i_k}:\rho(\mathbf{i_k}) = u_k} \prod_{n \in \mathbf{i_k}} \Xinkstar \big) \\
    \end{split}
\end{equation*}
When $N_k \rightarrow \infty$, since $\Xinkstar$ are binary variable, the first term becomes:
\begin{equation}
\label{Eq:converge_icm}
    \begin{split}
        0 & \leq \frac{1}{N_k^{u_k}} \sum_{\mathbf{i_k}:\rho(\mathbf{i_k}) < u_k} \big(\prod_{n \in \mathbf{i_k}} \Xinkstar \big)\\
        &\leq \frac{1}{N_k^{u_k}} \sum_{\mathbf{i_k}:\rho(\mathbf{i_k}) < u_k} 1 
    \end{split}
\end{equation}

The number of possible tuples of indices $\mathbf{i_k}$ with $\rho(\mathbf{i_k}) < u_k$ is at most $(u_k-1)^{u_k} N_k^{u_k-1}$. Because we have $N_k^{u_k-1}$ possibilities to choose the possible candidates for $(i_{k;1}, \ldots, i_{k;N_k})$ as long as we fix the last remaining index to one of the indices we have already chosen, we will still satisfy $\rho(\mathbf{i_k}) < u_k$. Then for each of the $u_k$ positions in the $u_k-tuple$ we may choose one out of $u_k-1$ candidates that we have chosen which gives $(u_k-1)^{u_k}$ possibilities. This covers also tuples with less than $u_k-1$ different indices as some of the candidates may not appear in the final tuple.   
Therefore, if $N_k \rightarrow \infty$, the expectation in Equation \ref{Eq:converge_icm} to $0$. Also note the number of possible tuples of indices $\mathbf{i_k}$ with $\rho(\mathbf{i_k}) = u_k$ is ${N_k \choose u_k}$. Hence the moment converges to:
\begin{equation*}
\begin{split}
    &  \lim_{\boldsymbol{N} \rightarrow \infty}m_{\boldsymbol{u}}(\nu_{\boldsymbol{N}}) = \mathbb{E}\Big[ \prod_{k=1}^K \prod_{n=1}^{u_k} \Xinkstar \Big]\\
\end{split}
\end{equation*}
The equality follows from Lemma \ref{lemma:unique_idx_icm} and we know $\lim_{N_k \rightarrow \infty} \frac{{N_k \choose u_k}}{N_k^{u_k}} = constant, \forall k$. Without loss of generality, consider the constant to be $1$, the remaining argument will not change. Using Theorem \ref{thm:mv_method_moments}, we have there exists a probability measure $\nu$ such that $m_{\mathbf{u}}(\nu) = \mathbb{E}\Big[ \prod_{k=1}^K \prod_{n=1}^{u_k} \Xinkstar \Big]$
\end{proof}

\begin{lemma}\label{lemma:finite_dim_equiv_icm}
      Let $\{\{A_{k;i}\}_{i \in \mathbb{N}}\}_{k=1}^K$  be $K$ infinitely joint exchangeable random binary processes. For any $K$ binary sequence $\{a_{k;1}, \ldots, a_{k;N_k}\}_{k=1}^K$ with $\sum_{n=1}^{N_k} a_{k;n} = r_k$:
    \begin{equation*}
    \begin{split}
      & \mathbb{P}(\{\{A_{k;n} = a_{k;n}\}_{n=1}^{N_k}\}_{k=1}^K)  \\
      & = \frac{1}{\prod_{k=1}^K {N_k \choose r_k}}~\mathbb{P}\Big(\sum_{n=1}^{N_1} A_{1;n} = r_1, \ldots, \sum_{n=1}^{N_K} A_{K;n} = r_K \Big)  
    \end{split}
    \end{equation*}
\end{lemma}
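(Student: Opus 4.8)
The plan is to combine the block-wise exchangeability of the processes with an elementary counting argument. The key observation is that the joint law is invariant under permutations applied \emph{independently} within each of the $K$ blocks, exactly the property exploited in Lemma \ref{lemma:unique_idx_icm} (and traceable to Corollary \ref{corollary_icm}). First I would use this to show that the left-hand side depends only on the block sums. Given two binary configurations $\{a_{k;n}\}$ and $\{a'_{k;n}\}$ sharing the same block totals $\sum_n a_{k;n}=\sum_n a'_{k;n}=r_k$ for every $k$, there is a product of within-block permutations $(\pi_1,\ldots,\pi_K)$ carrying one to the other, so joint exchangeability forces them to receive equal probability. Hence $\mathbb{P}(\{\{A_{k;n}=a_{k;n}\}_{n=1}^{N_k}\}_{k=1}^K)$ is a function of the vector $(r_1,\ldots,r_K)$ alone.

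Next I would decompose the event on the right. The event $\{\sum_{n=1}^{N_1}A_{1;n}=r_1,\ldots,\sum_{n=1}^{N_K}A_{K;n}=r_K\}$ is the disjoint union, over all binary configurations whose $k$-th block contains exactly $r_k$ ones, of the corresponding elementary events $\{\{A_{k;n}=a_{k;n}\}_{n=1}^{N_k}\}_{k=1}^K$. The number of such configurations is $\prod_{k=1}^K {N_k \choose r_k}$, since within each block one independently selects which $r_k$ of the $N_k$ positions equal one, and the choices across blocks are unconstrained. By the first step every one of these elementary events carries the same probability, which is precisely the left-hand side (our given configuration $\{a_{k;n}\}$ has block sums $r_k$ and so is one of them).

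Summing the $\prod_k {N_k \choose r_k}$ equal terms then gives
\begin{equation*}
\mathbb{P}\Big(\sum_{n=1}^{N_1}A_{1;n}=r_1,\ldots,\sum_{n=1}^{N_K}A_{K;n}=r_K\Big) = \prod_{k=1}^K {N_k \choose r_k}\;\mathbb{P}\big(\{\{A_{k;n}=a_{k;n}\}_{n=1}^{N_k}\}_{k=1}^K\big),
\end{equation*}
and dividing by $\prod_{k=1}^K {N_k \choose r_k}$ yields the claim.

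The only point requiring care, rather than a genuine obstacle, is the justification that the available exchangeability is block-wise independent rather than a single global permutation applied to the whole collection; this is exactly what lets the counting factor split as the product $\prod_k {N_k \choose r_k}$ across blocks instead of a single multinomial coefficient. Since that independent-permutation property is already furnished by Corollary \ref{corollary_icm} and the construction in Lemma \ref{lemma:existcondexchseq}, the remainder is routine set decomposition and counting.
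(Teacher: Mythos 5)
Your proof is correct and follows essentially the same argument as the paper's: the paper's (much terser) proof likewise observes that the event on block sums decomposes into $\prod_{k=1}^K \binom{N_k}{r_k}$ elementary configurations, all of equal probability by the block-wise (joint) exchangeability. Your write-up simply makes explicit the two steps the paper compresses into two sentences, including the correct observation that the permutation invariance must hold per block rather than globally.
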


\begin{proof}[Proof of Lemma \ref{lemma:finite_dim_equiv_icm}]
We can distribute the 1's for each sequence in $\prod_{k=1}^K {N_k \choose r_k}$ different ways. Due to $K$ sequences being exchangeable, all of them have the same probability.
\end{proof}
Next, we will prove Causal conditional de Finetti.
\begin{proof}[Proof of Theorem \ref{thm:icm_cond_definetti}]
Let $\nu$ be Causal conditional de Finetti measure for $\{\{ \Xinkstar \}\}$ (see Definition \ref{def:icm_definetti_measure}) and define $K$ random binary processes $\{\{Z^k_n\}_{n\in \mathbb{N}}\}_{k=1}^K$ with the following finite dimensional distribution:
\begin{equation*}
\begin{split}
    &\mathbb{P}(\{\{Z^k_n = z^k_n\}_{n=1}^{N_k}\}_{k=1}^K) \\
  &= \int \prod_{k=1}^K \prod_{n=1}^{N_k} \pi_{\theta_k}(z^k_{n}) d\nu(\boldsymbol{\theta})
\end{split}
\end{equation*}
Note from the definition, the series $\{\{Z^k_{n}\}_{n\in \mathbb{N}}\}_{k=1}^K$ is infinitely joint exchangeable. We will prove  $\{\{X_{k;n}^{i,*}\}_{n\in \mathbb{N}}\}_{k=1}^K$ and $\{\{Z^k_{n}\}_{n\in \mathbb{N}}\}_{k=1}^K$ have the same finite dimensional distribution. Define the following random vector $\boldsymbol{R}$ where $R_k = \frac{1}{N_k} \sum_{n=1}^{N_k} Z^k_{n}$. By Lemma \ref{lemma:finite_dim_equiv_icm}, it suffices to show that $\mathbf{Q}$ (as in Definition \ref{def:icm_definetti_measure}) and $\mathbf{R}$ have the same distributions for all $N_k \in \mathbb{N}$ and for all $k$ and by the second statment in Theorem \ref{thm:mv_method_moments}, we know two probability distributions are identical if their moments agree. 

\begin{equation}
\label{eq:ordered_exp_icm}
\begin{split}
  &\mathbb{E}\big[\prod_{k=1}^K (Q_k)^{u_k}\big] =  \frac{1}{\prod_{k=1}^K N_k^{u_k}}\mathbb{E}\Bigg[\prod_{k=1}^K \Big(\sum_{n=1}^{N_k} \Xinkstar \Big)^{u_k} \Bigg] \\
  &=  \frac{1}{\prod_{k=1}^K N_k^{u_k}} \mathbb{E}\Bigg[\prod_{k=1}^K
  \Big(\sum_{\mathbf{i_k}} \prod_{n \in \mathbf{i_k}} \Xinkstar  \Big)
  \Bigg] \\
  &= \frac{1}{\prod_{k=1}^K N_k^{u_k}} \sum_{a_1=1}^{u_1} \ldots \sum_{a_K=1}^{u_K} \sum_{\substack{
  \forall k, \mathbf{i_k}: \\ \rho(\mathbf{i_k}) = a_k} }\mathbb{E}\big[ \prod_{k=1}^K \prod_{n=1}^{a_k} \Xinkstar \big]
\end{split}
\end{equation}
The last equality follows from Lemma \ref{lemma:unique_idx_icm}. 
From Theorem \ref{theorem:moment_expression}, we know the above expectations are in fact the moments of the probability measure $\nu$:
\begin{equation*}
\begin{split}
  &\mathbb{E}\big[ \prod_{k=1}^K  \prod_{n=1}^{a_k} \Xinkstar \big] = m_{\boldsymbol{a}}(\nu)\\
  &= \int \prod_k (\theta_k)^{a_k} d\nu(\boldsymbol{\theta}) \\
  &= \int \prod_{k=1}^K \prod_{n=1}^{a_k} \pi_{\theta_k}(x^{k, *}_{i;n}) d\nu(\boldsymbol{\theta})\\
  &= \mathbb{E}\big[ \prod_{k=1}^K \prod_{n=1}^{a_k} Z^k_{n} \big]
\label{eq:z_def_ones}
\end{split}
\end{equation*}
Hence continuing Equation \ref{eq:ordered_exp_icm} and reverting the steps taken in Equation \ref{eq:ordered_exp_icm} and using Lemma \ref{lemma:unique_idx_icm} we have:
\begin{equation*}
\begin{split}
  (\ref{eq:ordered_exp_icm}) &= \frac{1}{\prod_{k=1}^K N_k^{u_k}} \sum_{a_1=1}^{u_1} \ldots \sum_{a_K=1}^{u_K} \sum_{\substack{
  \forall k, \mathbf{i_k}: \\ \rho(\mathbf{i_k}) = a_k} }\mathbb{E}\big[ \prod_{k=1}^K \prod_{n=1}^{a_k} Z^k_n \big]\\
    &= \frac{1}{\prod_{k=1}^K N_k^{u_k}} \mathbb{E}\Bigg[\prod_{k=1}^K \Big(\sum_{n=1}^{N_k} Z^k_{n} \Big)^{u_k} \Bigg]\\
  &=\mathbb{E}\big[\prod_{k=1}^K (R_k)^{u_k}\big]\\
 \end{split}
\end{equation*}

Hence the moments of the joint distribution of $\boldsymbol{Q}$ and $\boldsymbol{R}$ are the same, therefore the joint distributions must agree.
\end{proof}

\begin{proof}[Proof of Causal de Finetti]
Recall $\{(X_{1;n}, X_{2;n}, \dots X_{d;n})\}_{n \in \mathbb{N}}$ is an infinite exchangeable sequence and satisfies condition 2 in Causal de Finetti. Without loss of generality, we reorder the variables according to reversed topological ordering, i.e. a node's parents will always be placed after the node itself. Note a reversed topological ordering is not unique, but it must satisfy a node's non-descendants will come before itself. Then by Kolmogorov's chain rule, we can always write the joint probability distribution as
\begin{equation}
    P(\{(X_{1;n}, X_{2;n}, \dots X_{d;n})\}_{n=1}^N) = \prod_{i=1}^d P(X_{i;:}|UP_{i;:})
\label{eq:bayesfac}
\end{equation}
Recall $UP_{i;:} := (X_{i+1;:}, \ldots, X_{d;:})$, which contains all random variables that have higher variable index value than $i$, i.e. upstream of node $i$. \\
For each $X_{i;:}$, we want to show there exists a suitable probability measure $\nu_i$ such that we can write
$P(X_{i;:}|UP_{i;:}) = \int \prod_{n=1}^N p(X_{i;n} | PA_{i;n}, \boldsymbol{\theta_i}) d\nu_i(\boldsymbol{\theta_i})$. This has been shown in Theorem \ref{thm:icm_cond_definetti}. 
Hence the joint distribution becomes:
\begin{equation*}
\begin{split}
& P(\{(X_{1;n}, X_{2;n}, \dots X_{d;n})\}_{n=1}^N) \\
& = \int \prod_{n=1}^N\prod_{i=1}^d p(X_{i;n} | PA_{i;n}, \boldsymbol{\theta_i}) d\nu_i(\boldsymbol{\theta_i})
\end{split}
\end{equation*}
and we complete the proof.
\end{proof}

\section{Proof of Identifiability result}
\label{sec:proof_identifiability}
\subsection{Identifiability under i.i.d}
Let's first see why under i.\,i.\,d regime, it is only possible to differentiate the causal structure up to a Markov equivalence class.

\begin{definition}[$\mathcal{I}$-map]
Let $P$ be a distribution, $\mathcal{I}(P)$ denotes the set of conditional independence relationships of the form $X \ind Y \mid Z$ that hold in $P$. Let $\mathcal{G}$ be a DAG, $\mathcal{I}(\mathcal{G})$ denotes the set of conditional independence assumptions encoded in $\mathcal{G}$ which can be directly read-off via d-separation \citep{pearl1988probabilistic}. 
\end{definition}

\begin{definition}[Bayesian network structure]
A Bayesian network structure $\mathcal{G}$ is a directed acyclic graph whose nodes represent random variables $X_1, \dots, X_n$. Let $PA_i^\mathcal{G}$ denotes the parents of $X_i$ in $\mathcal{G}$, and $ND_i^\mathcal{G}$ denotes the variables in the graph that are not descendants of $X_i$. 
\end{definition}

\begin{definition}[Global markov property]
    Given a DAG $\mathcal{G}$ and a joint distribution $P$, this distribution is said to satisfy \textbf{global markov property} with respect to the DAG $\mathcal{G}$ if $\mathcal{I}(\mathcal{G}) \subseteq \mathcal{I}(P)$ \citep{Pearl2009Causality:Inference.}. 
    Alternatively, we say $P$ is \textbf{Markovian} with respect to $\mathcal{G}$.
\end{definition}
\begin{definition}[Faithfulness]
Given a DAG $\mathcal{G}$ and a joint distribution $P$, $P$ is \textbf{faithful} to the DAG $\mathcal{G}$ if $\mathcal{I}(P) \subseteq \mathcal{I}(\mathcal{G})$ \citep{Pearl2009Causality:Inference.}. 

\end{definition}

We denote $\mathcal{M}(\mathcal{G})$ to be the set of distributions that are Markovian and faithful with respect to $\mathcal{G}$:
\begin{align*}
  \mathcal{M}(\mathcal{G}) := \{P: \mathcal{I}(P) = \mathcal{I}(\mathcal{G})\} 
\end{align*}
Two DAGs $\mathcal{G}_1, \mathcal{G}_2$ are Markov equivalent if $\mathcal{M}(\mathcal{G}_1) = \mathcal{M}(\mathcal{G}_2)$.
\begin{lemma}[Graphical criteria for Markov Equivalence \citep{Janzing2017ElementsAlgorithms}]
\label{lemma:graphical_criteria_for_markov_equivalence}
    Two DAGs $\mathcal{G}_1$ and $\mathcal{G}_2$ are Markov equivalent if and only if they have the same skeleton and the same v-structures. 
\end{lemma}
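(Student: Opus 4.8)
The plan is to unfold ``Markov equivalent'' into its definition, $\mathcal{M}(\mathcal{G}_1) = \mathcal{M}(\mathcal{G}_2)$, which by the definitions above is exactly $\mathcal{I}(\mathcal{G}_1) = \mathcal{I}(\mathcal{G}_2)$, the equality of the two d-separation relations. I would then prove the two implications separately. For the forward implication I would show that both the skeleton and the set of v-structures are \emph{recoverable} from the collection of d-separation statements alone, so that equal $\mathcal{I}$'s force equal skeletons and equal v-structures. For the reverse implication I would show the opposite, that the d-separation relation is \emph{determined} by the skeleton together with the v-structures, via a path-tracing argument.

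For the forward direction, consider first the skeleton. Two nodes $X,Y$ are adjacent in a DAG if and only if no subset $S$ of the remaining nodes d-separates them: if there is an edge $X - Y$ then the single-edge path can never be blocked, so $X$ and $Y$ are d-connected given every $S$; conversely, if $X,Y$ are non-adjacent then the parent set of whichever of the two is not an ancestor of the other (at least one such choice exists by acyclicity) d-separates them. Hence adjacency, and therefore the whole skeleton, is a function of $\mathcal{I}(\mathcal{G})$. Once the skeletons are known to agree, the unshielded triples $X - Z - Y$ (with $X,Y$ non-adjacent) agree as well, and for each such triple I would use the criterion that $Z$ is a collider exactly when some d-separator of $X,Y$ omits $Z$. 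One direction is the observation that if $Z$ is a non-collider on $X - Z - Y$ then this path is active whenever $Z \notin S$, forcing $Z$ into every separator; the other is that for a genuine v-structure $X \to Z \leftarrow Y$ the separating set $\mathrm{Pa}(X)$ or $\mathrm{Pa}(Y)$ used above omits $Z$, since $Z$ is a child of both. Thus v-structure status is read off from $\mathcal{I}(\mathcal{G})$, so equal $\mathcal{I}$'s yield equal v-structures.

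For the reverse direction, assume $\mathcal{G}_1,\mathcal{G}_2$ share skeleton and v-structures, and aim to prove that $X$ and $Y$ are d-connected given $S$ in $\mathcal{G}_1$ if and only if they are in $\mathcal{G}_2$, which gives $\mathcal{I}(\mathcal{G}_1) = \mathcal{I}(\mathcal{G}_2)$. I would take an $S$-active path $p$ in $\mathcal{G}_1$ and transfer its vertex sequence to $\mathcal{G}_2$, which is legal because the skeletons coincide. Every \emph{unshielded} collider on $p$ is a v-structure and every unshielded non-collider is a non-v-structure, so by hypothesis their collider/non-collider roles are identical in $\mathcal{G}_2$; the only vertices whose role can flip are \emph{shielded} triples, where the two path-neighbours are themselves adjacent. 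The main obstacle, and the heart of the argument, is to handle these flips: whenever a shielded triple fails to remain active in $\mathcal{G}_2$, I would exploit the chord joining its two neighbours to shortcut the path, producing a strictly shorter walk between $X$ and $Y$, and then induct on path length to guarantee that some $S$-active path survives in $\mathcal{G}_2$. By the symmetry of the hypotheses the same construction runs from $\mathcal{G}_2$ back to $\mathcal{G}_1$, establishing the equality of the two independence models and hence Markov equivalence.
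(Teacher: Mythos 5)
First, note that the paper does not actually prove this lemma: it is stated with a citation (it is the classical Verma--Pearl theorem), and the paper's identifiability argument only uses it as a black box. So the comparison here is between your proposal and the known proofs in the literature. Your forward direction is correct and is the standard argument: adjacency is characterized by ``no subset d-separates the pair'', non-adjacent pairs are separated by the parents of whichever node is not an ancestor of the other, and the collider status of an unshielded triple is read off from whether some separator omits the middle node.

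The reverse direction, however, is where the theorem is genuinely hard, and your sketch has two concrete gaps. First, your claim that ``the only vertices whose role can flip are shielded triples'' conflates the collider/non-collider \emph{role} with \emph{activeness}. Under the path definition of d-connection, a collider on an active path must have a descendant in $S$, and descendant relations are \emph{not} determined by skeleton plus v-structures (e.g.\ $A \to B \to C$ versus $A \leftarrow B \leftarrow C$ have the same skeleton and no v-structures but opposite descendant sets). So a vertex can be a collider on the path in \emph{both} graphs, satisfy the descendant condition in $\mathcal{G}_1$, and fail it in $\mathcal{G}_2$ --- and such a triple need not be shielded, so there is no chord to exploit. (This particular gap can be repaired by switching to the walk characterization of d-connection, in which colliders are required to lie in $S$ itself; then every failing triple is indeed shielded.) Second, the shortcut-and-induct step is unjustified: when you delete $B$ from $\ldots A, B, C \ldots$ and reroute through the chord $A - C$, the collider status of $A$ and of $C$ on the shorter walk depends on the orientation of that chord, which is unconstrained by your hypotheses; the shorter walk therefore need not be $S$-active in $\mathcal{G}_1$, so the induction hypothesis cannot be invoked. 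This is precisely why the published proofs require more machinery --- either a shortest-active-walk argument with substantial case analysis, or Chickering's transformational characterization (two DAGs with the same skeleton and v-structures are connected by a sequence of covered edge reversals, each of which visibly preserves skeleton, v-structures, and the d-separation relation). As it stands, your reverse direction is a plausible plan, not a proof.
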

This means for any suitably $i.\,i.\,d$ generated distribution $P$, one cannot uniquely determine the underlying graph that generates this distribution but can only determine up to d-separtion equivalence. 
\subsection{Identifiability under exchangeable}

\begin{definition}[Acyclic Directed Mixed Graph (ADMG)]
    A graph $\mathcal{M}$ is acyclic if it contains no directed cycles, i.e a sequence of edges of the form $x \rightarrow \dots \rightarrow x$. There are two types of edge between a pair of vertices in ADMG: directed ($x \rightarrow y$) or bi-directed ($x \leftrightarrow y$). In particular, there could be two edges between a pair of vertices, but in this case at least one edge must be bi-directed to avoid a directed cycle. 
\end{definition}

\begin{definition}[ICM operator on a DAG]
   Let $U$ be the space of all DAGs whose nodes represent $X_1, \ldots, X_d$. Let $V$ be the space of ADMGs whose nodes represent $\{(X_{i;n})\}$, where $i \in [d], n \in \mathbb{N}$. A mapping $F$ from $U$ to $V$ is an ICM operator if $F(\mathcal{G})$ satisfies:
\begin{itemize}
    \item $F(\mathcal{G})$ restricted to the subset of vertices $\{X_{1; n}, \ldots, X_{d; n}\}$ is a DAG $\mathcal{G}$, for any $n \in \mathbb{N}$,
    \item $X_{i;n} \leftrightarrow X_{i;m}$ whenever $n \neq m$ for all $i \in [d]$
    \item there are no other edges other than stated above
\end{itemize}
We denote the resulting ADMG as $ICM(\mathcal{G})$. Let $\textbf{PA}_{i;n}^\mathcal{G}$ denote the parents of $X_{i;n}$ in $ICM(\mathcal{G})$ and similarly for $\textbf{ND}_{i;n}^\mathcal{G}$ for corresponding non-descendants.
\end{definition}

\begin{theorem}[Markov equivalence criterion for ADMGs \citep{pmlr-vR1-spirtes97b}]
\label{thm:markov_equiv_dmag}
Two ADMGs over the same set of vertices are Markov equivalent if and only if 
\begin{enumerate}
    \item They have the same skeleton
    \item They have the same v-structures
    \item If a path u is a discriminating path for a vertex B in both graphs, then B is a collider on the path in one graph if and only if it is a collider on the path in the other.
\end{enumerate}
Here we do not specify the details for condition 3 as it is not used in below proofs, for more details please refer to \citep{pmlr-vR1-spirtes97b}.
\end{theorem}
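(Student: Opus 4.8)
The plan is to prove both directions through the single observation that two DMAGs are Markov equivalent precisely when they encode the same m-separation relations, i.e. $\mathcal{I}(\mathcal{G}_1) = \mathcal{I}(\mathcal{G}_2)$. The entire argument then reduces to showing that the three listed graphical features --- skeleton, v-structures, and collider status along discriminating paths --- together form a \emph{complete invariant} of the set of m-separation statements a DMAG induces: necessity says each feature is recoverable from $\mathcal{I}(\mathcal{G})$, and sufficiency says these features pin down $\mathcal{I}(\mathcal{G})$ entirely.

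For the necessity direction (Markov equivalence implies the three conditions), I would show each feature is individually readable off $\mathcal{I}(\mathcal{G})$. First, two vertices are adjacent in a DMAG if and only if they cannot be m-separated by \emph{any} conditioning set; hence $\mathcal{I}(\mathcal{G}_1)=\mathcal{I}(\mathcal{G}_2)$ forces identical skeletons. Second, given the shared skeleton, for an unshielded triple with nonadjacent endpoints the middle vertex is a collider if and only if it lies in no set that m-separates the endpoints, so equality of $\mathcal{I}$ forces the same v-structures --- this is the same mechanism as the DAG case in Lemma \ref{lemma:graphical_criteria_for_markov_equivalence}. Third, along any path that is discriminating for a vertex $B$ in both graphs, the very definition of a discriminating path pins the collider status of $B$ to a specific pair of m-separation versus m-connection statements, so the shared m-separation relation forces agreement on condition 3.

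For the sufficiency direction (the three conditions imply Markov equivalence) I would prove that the features determine every m-connection relation, by induction on the length of an m-connecting path. Concretely, I would show that any path m-connecting $x$ and $y$ given a set $Z$ in $\mathcal{G}_1$ can be transformed into an m-connecting path between the same endpoints given $Z$ in $\mathcal{G}_2$, and symmetrically. The induction exploits that a shared skeleton guarantees candidate paths exist in both graphs, and that whether a vertex on the path acts as a collider or a noncollider is governed exactly by the shared unshielded colliders together with the discriminating-path condition; triples whose orientation is not fixed by these two mechanisms can be rerouted without destroying connectivity. Establishing this two-way path conversion yields $\mathcal{I}(\mathcal{G}_1) = \mathcal{I}(\mathcal{G}_2)$, hence Markov equivalence.

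The hard part will be the sufficiency direction, and within it the handling of discriminating paths: these are precisely the configurations where collider status is \emph{not} locally determined by skeleton and unshielded colliders, so one must verify that condition 3 captures every such nonlocal constraint and that no additional invariant is required. This is exactly the point at which the mixed-graph case departs from the DAG case, and it is the technical core of \cite{SpirtesABias}; in a full write-up I would follow their path-transformation argument rather than reproduce the lengthy case analysis here, since condition 3 itself is not invoked in the subsequent identifiability proofs.
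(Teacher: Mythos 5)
First, a framing point: the paper does not prove this statement at all. Theorem \ref{thm:markov_equiv_dmag} is imported verbatim from \cite{SpirtesABias} and used as a black box in the proof of Corollary \ref{corollary_markov_equivalence_exchangeable}; the paper even declines to spell out condition 3, deferring entirely to the reference. So your proposal is not competing with an in-paper argument but with the literature the paper cites, and the comparison has to be made against Spirtes and Richardson's actual proof.

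Taken as a standalone proof, your sketch has two genuine gaps. The first is in the necessity direction: you assert that two vertices of a DMAG are adjacent if and only if no conditioning set m-separates them. That equivalence is a property of \emph{maximal} ancestral graphs --- it is essentially the maximality axiom --- and it fails for general acyclic directed mixed graphs: two non-adjacent vertices joined by an inducing path cannot be m-separated by any set, so equality of the m-separation relations does not force equality of skeletons without either a maximality hypothesis or a lemma exhibiting a canonical separating set for every non-adjacent pair (the D-SEP construction in \cite{SpirtesABias}). Since the paper's definition of DMAG imposes no maximality condition, this step does not go through as stated. The second gap is the sufficiency direction, which you yourself identify as the technical core: your induction proposes to ``reroute'' triples whose collider status is not fixed by the skeleton, the unshielded colliders, and the discriminating paths, but the claim that these three features capture \emph{every} constraint on collider status --- that no further invariant is needed and that rerouting never destroys m-connectivity --- is precisely the content of the theorem. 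Deferring that case analysis to \cite{SpirtesABias} means the proposal proves neither direction in full; it is an annotated citation rather than a proof. Given that the paper itself treats the theorem purely as a citation, deferring is defensible practice, but the write-up should then be framed as such rather than as an argument.
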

We inherit the argument used in finding correct causal structure in the i.i.d. regime where we match conditional independence assumptions encoded in the graph with observed conditional independence relationships in distributions. 
\begin{corollary}
\label{corollary_markov_equivalence_exchangeable}
\begin{align*}
    \mathcal{I}(\text{ICM}(\mathcal{G}_1)) = \mathcal{I}(\text{ICM}(\mathcal{G}_2)) \Leftrightarrow \mathcal{G}_1 = \mathcal{G}_2
\end{align*}
\end{corollary}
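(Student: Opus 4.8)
The plan is to prove the two directions separately, with essentially all the work in the forward implication. The reverse direction is immediate: $\text{ICM}$ is a well-defined function of its argument, so $\mathcal{G}_1 = \mathcal{G}_2$ forces $\text{ICM}(\mathcal{G}_1) = \text{ICM}(\mathcal{G}_2)$ and hence $\mathcal{I}(\text{ICM}(\mathcal{G}_1)) = \mathcal{I}(\text{ICM}(\mathcal{G}_2))$. For the forward direction I would first note that $\mathcal{I}(\text{ICM}(\mathcal{G}_1)) = \mathcal{I}(\text{ICM}(\mathcal{G}_2))$ is exactly the statement that the two DMAGs are Markov equivalent, so Theorem \ref{thm:markov_equiv_dmag} applies; I only need its necessity part, namely that Markov equivalence forces a common skeleton (condition 1) and common v-structures (condition 2), so the discriminating-path condition (3) will not be used. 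Since the vertices $\{X_{i;n}\}$ are labeled identically in both graphs, all of these comparisons are over a fixed labeled vertex set. Recovering the skeleton of the base DAG is then routine: working within a single fixed sample layer $n$, the within-layer adjacencies of $\text{ICM}(\mathcal{G})$ are by construction exactly the adjacencies of $\mathcal{G}$, because the only cross-sample edges join twins $X_{i;n}, X_{i;m}$ of the same variable index, and there is no edge between $X_{i;n}$ and $X_{j;m}$ when $i \neq j$ and $n \neq m$. Hence a common ICM skeleton yields a common skeleton for $\mathcal{G}_1$ and $\mathcal{G}_2$.

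The crux is to orient every edge from the v-structures, which is where the bi-directed twin edges do the work. Fix two distinct samples $n \neq m$ and an adjacent pair $\{X_i, X_j\}$ in the common skeleton. I claim the edge is oriented $X_i \to X_j$ in $\mathcal{G}$ if and only if $\text{ICM}(\mathcal{G})$ contains an unshielded collider at $X_{j;n}$ on the triple $(X_{i;n}, X_{j;n}, X_{j;m})$. The triple is unshielded because $X_{i;n}$ and $X_{j;m}$ are never adjacent for $i \neq j$, $n \neq m$. If $X_i \to X_j$, then $X_{i;n} \to X_{j;n}$ contributes an arrowhead at $X_{j;n}$ and the twin edge $X_{j;n} \leftrightarrow X_{j;m}$ contributes a second one, so $X_{j;n}$ is a collider; conversely, if $X_j \to X_i$, then $X_{j;n} \to X_{i;n}$ leaves a tail at $X_{j;n}$ and this collider is absent (a symmetric collider appears at $X_{i;n}$ instead). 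Thus each edge orientation is read off from a specific v-structure, and since the two ICM graphs have identical v-structures, all edge orientations agree; together with the common skeleton this gives $\mathcal{G}_1 = \mathcal{G}_2$.

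The main obstacle, and the conceptual heart of the result, is precisely this orientation step: one must verify that the cross-sample bi-directed edges manufacture a v-structure at the head of \emph{every} directed edge, including edges that participate in no within-layer v-structure and that would therefore remain unoriented in the single-layer i.\,i.\,d.\ setting governed by Lemma \ref{lemma:graphical_criteria_for_markov_equivalence}. Checking that no shielding occurs and that no spurious collider at $X_{j;n}$ arises through another incident edge is elementary but should be done carefully, since it is exactly what distinguishes the exchangeable case from the classical Markov-equivalence-class limitation. Once this is in place, the identifiability statement, that $\mathcal{E}(\mathcal{G}_1) = \mathcal{E}(\mathcal{G}_2)$ if and only if $\mathcal{G}_1 = \mathcal{G}_2$, follows directly from this corollary together with the definition of $\mathcal{E}(\mathcal{G})$.
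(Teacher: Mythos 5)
Your proof is correct, and it turns on the same key device as the paper's: the twin bidirected edge $X_{j;n} \leftrightarrow X_{j;m}$ creates an unshielded collider $X_{i;n} \rightarrow X_{j;n} \leftrightarrow X_{j;m}$ at the head of every directed edge, so the v-structures of $\text{ICM}(\mathcal{G})$ encode all orientations of $\mathcal{G}$. The decomposition, however, is genuinely different. The paper proves the contrapositive and splits into two cases: when $\mathcal{G}_1$ and $\mathcal{G}_2$ are not Markov equivalent as DAGs it appeals to the inclusion $\mathcal{I}(\mathcal{G}) \subseteq \mathcal{I}(\text{ICM}(\mathcal{G}))$ to transfer the disagreement to the ICM graphs, and only when they are Markov equivalent but distinct does it exhibit the differing twin-edge v-structure and invoke Theorem \ref{thm:markov_equiv_dmag}. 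You instead argue forward and uniformly: Markov equivalence of the two DMAGs yields, by the necessity half of Theorem \ref{thm:markov_equiv_dmag}, a common skeleton and common v-structures; the common skeleton restricted to a single sample layer recovers the skeleton of the base DAG, and your collider criterion then recovers every edge orientation, with no case distinction. What your route buys is that it bypasses the paper's first case, which as written has a logical gap: from $\mathcal{I}(\mathcal{G}_i) \subseteq \mathcal{I}(\text{ICM}(\mathcal{G}_i))$ and $\mathcal{I}(\mathcal{G}_1) \neq \mathcal{I}(\mathcal{G}_2)$ one cannot conclude $\mathcal{I}(\text{ICM}(\mathcal{G}_1)) \neq \mathcal{I}(\text{ICM}(\mathcal{G}_2))$, since distinct sets can have equal supersets; repairing that step requires the stronger fact that $\mathcal{I}(\mathcal{G})$ is exactly the single-layer restriction of $\mathcal{I}(\text{ICM}(\mathcal{G}))$, not merely a subset. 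Your reconstruction never compares the base DAGs' own CI sets, so it is immune to this issue, and like the paper it uses only the necessity direction of Theorem \ref{thm:markov_equiv_dmag}, never its condition 3.
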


\begin{proof}
The direction for the case $\mathcal{G}_1 = \mathcal{G}_2$ is trivial. We are left to show that when $\mathcal{G}_1 \neq \mathcal{G}_2$, $\mathcal{I}(\text{ICM}(\mathcal{G}_1)) \neq \mathcal{I}(\text{ICM}(\mathcal{G}_2))$.

Suppose $\mathcal{G}_1, \mathcal{G}_2$ are BNs over the same set of random variables with length $n$. Suppose further $\mathcal{G}_1 \neq \mathcal{G}_2$, and is not markov equivalent, i.e. $\mathcal{I}(\mathcal{G}_1) \neq \mathcal{I}(\mathcal{G}_2)$. Be definition of $\text{ICM}(\mathcal{G})$, we have $\mathcal{I}(\mathcal{G}) \subseteq \mathcal{I}(\text{ICM}(\mathcal{G}))$ .Thus $\mathcal{I}(\mathcal{G}_1) \neq \mathcal{I}(\mathcal{G}_2)$ implies $\mathcal{I}(\text{ICM}(\mathcal{G}_1)) \neq \mathcal{I}(\text{ICM}(\mathcal{G}_2))$. 

Consider the case $\mathcal{G}_1 \neq \mathcal{G}_2$, but they are Markov equivalent. We know two DAGs are Markov equivalent if and only if they have the same skeleton and same v-structures. Thus, $\mathcal{G}_1$, $\mathcal{G}_2$ differs in some node $X_i$ where the orientation of its edge to some node $X_j$ is different while not deleting or creating new v-structures within $\mathcal{G}_1$ and $\mathcal{G}_2$. Wlog, let $X_{i;n} \rightarrow X_{j;n}$ in $\mathcal{G}_1$ and $X_{i;n} \leftarrow X_{j;n}$ in $\mathcal{G}_2$. Note $\text{ICM}(\mathcal{G}_1)$ and $\text{ICM}(\mathcal{G}_2)$ have different v-structures: $X_{i;n} \rightarrow X_{j;n} \leftrightarrow X_{j;n+1}$ is a v-structure in $\text{ICM}(\mathcal{G}_1)$, but not in $\text{ICM}(\mathcal{G}_2)$ as $X_{i;n} \leftarrow X_{j;n} \leftrightarrow X_{j;n+1}$. Thus by Theorem \ref{thm:markov_equiv_dmag}, result follows. 
\end{proof}
Denote $\mathcal{E}(\mathcal{G})$ to be the set of distributions that are Markovian and faithful to $\text{ICM}(\mathcal{G})$:
\begin{align*}
      \mathcal{E}(\mathcal{G}) := \{P: \mathcal{I}(P) = \mathcal{I}(\text{ICM}(\mathcal{G}))\}  
\end{align*}
Two DAGs $\mathcal{G}_1, \mathcal{G}_2$ are Markov equivalent under ICM generative process if $\mathcal{E}(\mathcal{G}_1) = \mathcal{E}(\mathcal{G}_2)$. By Corollary \ref{corollary_markov_equivalence_exchangeable}, $\mathcal{E}(\mathcal{G}_1) = \mathcal{E}(\mathcal{G}_2)$ if and only if $\mathcal{G}_1 = \mathcal{G}_2$.

\section{Algorithm}
\label{sec:appendix_algorithm}
\begin{algorithm}[!htp]
\DontPrintSemicolon
  \KwInput{For $e \in \mathcal{E}$, we have $(X^e_{1;n}, \ldots, X^e_{d;n})_{n=1}^{N_e}$ where $X^e_{i;n}$ denotes the $i$-th variable of $n$-th sample in environment $e$ and $N_e$ is the number of samples in environment $e$. Assume $N_e \geq 2, \forall e$.}
  \KwOutput{A directed acyclic graph $\mathcal{G}$}
  \KwStepone{Identify variables' topological ordering. Initiate index list $L := [1, \ldots, d]$ and an empty dictionary of lists $\mathbf{S}$ with keys from $L$. Set starting index $k = 1$.}
  \While{$L$ is not empty}
  {
    \For{$i\in L$}
    {
        \If{$X^e_{i;1} \ind X^e_{j;2} \mid \{X^e_{k;1}\}_{k\neq i}, \forall j\neq i$}
        {
        Append $i$ in $\mathbf{S}[k]$ and remove $i$ from $L$\;
        }
    }
    $k = k + 1$
  }
  \KwSteptwo{Identify edges. Set $t:=1$ and reset $k:=1$. Here since each sample shares the same underlying causal graph, we abbreviate $X_{i;n}$ with $X_i$ where $n$ can be any number.}
  \While{$t \leq d-1$}
  {
    \While{$k \leq d-t-1$}
    {
    \For{each $i \in S_k$ and $j \in S_{k+t}$}
        {
        \If{$t=1$}
            {
            Test $X_{i} \ind X_{j} \mid \bigcup_{m > k+t}\bigcup_{i \in S_m} X_i$, if it holds then there exist no edge, else $X_{j;n} \rightarrow X_{i;n}, \forall n$.
            }
        \If{$t > 1$}
            {
            Define Z contains three set of nodes: $ \bigcup_{m > k+t}\bigcup_{i \in S_m} X_i$, $\textbf{PA}_i \cap S_{<{k+t}}$, $S_{k+t} \backslash X_j$.
            Test $X_{i} \ind X_{j} \mid Z$, if it holds then there exists no edge, else $X_{j;n} \rightarrow X_{i;n}, \forall n$.
            }
        }
    }
  }
\caption{"Causal-de-Finetti" Algorithm: causal discovery in ICM-generative processes}
\label{alg:mvcausal_exchangeable}
\end{algorithm}

\subsection{Proof}
Assume there exists no unobserved latent variables and our observed data is indeed generated from some ICM generative process. Algorithm \ref{alg:mvcausal_exchangeable} can identify the underlying DAG. Below we show its main steps. 
\begin{enumerate}
    \item Identify the topological ordering of observed variables
    \item Identify edges between different topological orders. 
\end{enumerate}

\begin{definition}[$n$-order sinks]
    Given a DAG, $X_{i;n}$ denotes the $n$-th sample in one environment and $i$-th variable in $n$-th sample: 
    \begin{itemize}
        \item A node $X_{i;n}$ is a first-order sink $S_1$ if it does not have any outgoing edges.
        \item A node $X_{i;n}$ is an $k+1$-order sink $S_{k+1}$, if all of its outgoing edges are to $l$-order sink (where $l < k+1$) and at least one of them is a $k$-order sink.
    \end{itemize}
    
    We denote the set of $k$-order sinks as $S_k$ and $\cup_{i=1}^{k-1}S_i = S_{<k}$.
\end{definition}

In Step 1 of the algorithm, we aim to use appropriate conditional independence tests to determine the topological ordering of our observed variables. The main idea is we iteratively find the first-order sinks $S_1$ and then remove them to find the next first-order sinks, and so on. 

\leafnodeidentification*
\begin{proof}[Proof of Lemma \ref{lemma:leaf-node-identification}]
Let $X_{i;n} \in S_1$. We first examine all the possible paths between $X_{i;n}$ and $X_{j;m}$. There are two cases for any such paths: it starts by $X_{i;n} - X_{k;n} - \ldots$ for some $k \neq i$, or $X_{i;n} - \theta_i - X_{i;p}$ for some $p \neq n$. In the first case, since $X_{i;n}$ is a first-order sink, the first edge is outgoing from $X_{k;n}$ and hence is blocked by conditioning on $X_{k;n}$. In the second case, we cannot continue from $X_{i;p}$ since it does not have any outgoing edges and the path would then include a collider and we thus did not condition on $X_{i;p}$.

To prove the converse, assume that $X_{i;n}$ is not a first-order sink but still satisfies the conditional independence. However, this would mean that $X_{i;n}$ has an outgoing edge to $X_{k;n}$ for some $k \neq i$. Then the path $X_{i;n} \xrightarrow{} X_{k;n} \xleftarrow{} \theta_k \xrightarrow{} X_{k;m}$ is activated by conditioning on $X_{k;n}$, and hence the conditional independencies cannot hold.
\end{proof}

This lemma provides us with a test to search for first-order sinks. We can state a similar lemma for $k$-order sinks (note that conceptually the lemma is equivalent to iteratively find first order sinks after removing the original lower order sinks).

\begin{lemma}[$k$-order sink condition]\label{lemma:kordersink}
    A node $X_{i;n}$ is a $k$-order sink if and only if the following holds for every $m\neq n$ and $j \neq i$ and $X_{j;m} \in S_{\geq k}$:
    \begin{equation*}
    \label{norderCI}
        X_{i;n}\ind X_{j;m} ~\vert~ \{X_{l;n}\}_{l\neq i} - S_{<k}
    \end{equation*}
\end{lemma}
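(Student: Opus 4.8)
The plan is to reduce the statement to Lemma~\ref{lemma:1stordersink} by exploiting the identity $S_k(\mathcal{G}) = S_1(\mathcal{G}')$, where $\mathcal{G}'$ is the subgraph of $\mathcal{G}$ induced on the nodes of order at least $k$ (that is, $V\setminus S_{<k}$). This identity is the graph-theoretic content of the remark that locating $k$-order sinks amounts to iterating the first-order search after deleting the lower-order sinks: writing $\mathrm{order}(X_i)=1+\max_{c\in\textbf{CH}_i}\mathrm{order}(c)$, one checks that a node of order $\geq k$ has order exactly $k$ precisely when all its children were deleted, i.e.\ when it is a first-order sink of $\mathcal{G}'$ (if it had order $>k$ it would retain a surviving child of order $\geq k$). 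First I would establish this reduction; everything else is transporting Lemma~\ref{lemma:1stordersink} from $\text{ICM}(\mathcal{G}')$ back to $\text{ICM}(\mathcal{G})$.

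Next I would marginalise $\text{ICM}(\mathcal{G})$ over the surviving copies $O:=\{X_{l;n}:\mathrm{order}(X_l)\geq k\}$. Because every node of $S_{<k}$ is a pure descendant (lower order means closer to the sinks), the latent projection of $\text{ICM}(\mathcal{G})$ onto $O$ is exactly $\text{ICM}(\mathcal{G}')$: a deleted node can neither lie on a directed path between two nodes of $O$ (the intermediate vertices of such a path are ancestors of the endpoint, hence of order $>k$ and themselves in $O$) nor be a common ancestor of two nodes of $O$ (all its descendants have strictly smaller order), so no new directed or bi-directed edges appear, and the only bi-directed edges remaining are those induced by the shared latents $\theta_l$, $l\in O$. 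Since $P$ is Markovian and faithful to $\text{ICM}(\mathcal{G})$, its margin $P|_O$ is Markovian and faithful to this projection, i.e.\ $P|_O\in\mathcal{E}(\mathcal{G}')$. Applying Lemma~\ref{lemma:1stordersink} inside $\text{ICM}(\mathcal{G}')$ to a node $X_{i;n}$ with $X_i\in O$ then states that $X_{i;n}$ is a first-order sink of $\mathcal{G}'$ iff $X_{i;n}\ind X_{j;m}\mid\{X_{l;n}:X_l\in O,\,l\neq i\}$ for all $m\neq n$, $j\neq i$, $X_{j;m}\in O$; translating through $S_k(\mathcal{G})=S_1(\mathcal{G}')$, through $\{X_{l;n}:X_l\in O,\,l\neq i\}=\{X_{l;n}\}_{l\neq i}-S_{<k}$, and through $O=S_{\geq k}$ yields the claim.

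Alternatively, one can run the d-separation argument of Lemma~\ref{lemma:1stordersink} directly in $\text{ICM}(\mathcal{G})$ with conditioning set $Z:=\{X_{l;n}\}_{l\neq i}-S_{<k}$. The edges leaving $X_{i;n}$ split into edges to parents (which have order $>k$, so they lie in $Z$ and block the path as conditioned non-colliders, exactly as before), edges to children (which lie in $S_{<k}$ and are unconditioned), and the edge to $\theta_i$. For the converse, if $X_{i;n}$ has order $\geq k$ but is not a $k$-order sink, it has a child $X_c$ of order $\geq k$, and the path $X_{i;n}\to X_{c;n}\leftarrow\theta_c\to X_{c;m}$ is active given $Z$ because the collider $X_{c;n}\in Z$; faithfulness then makes $X_{i;n}$ and $X_{c;m}$ dependent given $Z$, with $X_{c;m}\in S_{\geq k}$.

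The hard part will be the new forward case: a path that descends out of $X_{i;n}$ through unconditioned lower-order sinks. I would argue that such a descent always terminates in an \emph{inactive} collider — either by reaching a childless sink or by turning upward at some $X_{g;n}$ — because every within-sample descendant of such a node again has order $<k$ and so is absent from $Z$; in particular, any attempt to escape to another sample through a latent $\theta_g$ forces $X_{g;n}$ to be exactly such an inactive collider, preventing the path from reaching any $X_{j;m}\in S_{\geq k}$. I would also flag that the converse holds only for nodes of order $\geq k$ — for order $<k$ the conditional independence can hold spuriously, so the equivalence is stated for nodes not yet assigned to $S_{<k}$, which is precisely the regime in which the algorithm invokes it.
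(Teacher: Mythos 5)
Your proof is correct, and it supplies something the paper itself does not: the paper states Lemma \ref{lemma:kordersink} with no proof at all, only the parenthetical remark that it is ``conceptually equivalent'' to iterating the first-order search after removing the lower-order sinks. Your first route is precisely that remark made rigorous, and the two steps you add are exactly the ones the remark silently needs: (i) the order-theoretic identity $S_k(\mathcal{G})=S_1(\mathcal{G}')$ for the induced subgraph $\mathcal{G}'$ on nodes of order at least $k$, and (ii) the verification that the latent projection of $\text{ICM}(\mathcal{G})$ onto the surviving copies is exactly $\text{ICM}(\mathcal{G}')$ --- your observation that deleted nodes are pure descendants, hence can never serve as intermediates on directed or common-cause paths between retained nodes, is what guarantees that no new directed or bi-directed edges appear, so that Markovianity and faithfulness pass to the margin and Lemma \ref{lemma:1stordersink} can be applied inside $\mathcal{G}'$. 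Your second, direct route is equally valid and is the natural extension of the paper's own proof of Lemma \ref{lemma:1stordersink}: the only genuinely new case is the descending path through unconditioned nodes of order $<k$, and you close it correctly, since any such descent can only cross samples or turn upward through a vertex that is a collider all of whose descendants again have order $<k$ and hence lie outside the conditioning set; the converse uses the same device as the paper, the path $X_{i;n}\rightarrow X_{c;n}\leftarrow\theta_c\rightarrow X_{c;m}$ activated by the conditioned collider $X_{c;n}$.

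Your closing flag is not a pedantic caveat but a genuine correction to the lemma as stated: the ``if'' direction fails for nodes already in $S_{<k}$. Concretely, in the bivariate graph $X\rightarrow Y$ take $k=2$: the node $Y_n\in S_1$ satisfies $Y_n\ind X_m\mid X_n$ for every $m\neq n$ (the path through the $X$-latent is blocked by the conditioned non-collider $X_n$, and the path through the $Y$-latent dies at the unconditioned collider $Y_m$), yet $Y$ is not a second-order sink. So the biconditional holds only for nodes of order $\geq k$ --- equivalently, for nodes not yet assigned to $S_{<k}$ --- which is exactly the regime in which Algorithm \ref{alg:mvcausal_exchangeable} invokes the test, but which neither the lemma statement nor the paper's (absent) proof makes explicit.
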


Using Lemma \ref{lemma:leaf-node-identification} and \ref{lemma:kordersink}, we can iteratively determine the set of $S_k$ for all $k$. Note, that these sets represent a topographic ordering of observable variables: edges can only run from higher order sinks to lower order sinks. Hence, the only thing remaining is to determine for every $k$ and every pair of nodes $X_{i;n} \in S_k$ and $X_{j;n} \in S_{<k}$ if they are connected by an edge. Since each sample shares the same underlying causal structure, without loss of generality, we abbreviate $X_{i;n}$ with $X_i$. 


\edgeidentification*

\begin{proof}[Proof of Lemma \ref{lemma:edgeidentification}] 
First, we prove direction ($\Rightarrow$) and suppose there is no direct edge, i.e. $X_i \to X_j$. If there is no path connecting $X_i$ and $X_j$, then $X_i \ind X_j$. The result trivially holds. Next, we assume there is a path $p$ connecting $X_i$ and $X_j$. The path must satisfy one of the two conditions: Either (case 1) there exists $X_k$ in the path $p$ such that $X_k \in S_{>n}$, or (case 2) all variables in the path $\in S_{\leq n}$. 

When $k = 1$: 
    Under case 1), let $W$ be the set containing all variables belong to $S_{>n}$. Note $\lvert W \rvert \geq 1$ by condition. Then there exists a non-collider $X_k \in W$. Suppose all variables in $W$ are colliders and $W \neq \emptyset$, then there must $ \exists X_l \in p$ and $X_l \not \in W$ that has an edge outgoing to some variable contained in $W$. Since $W \neq \emptyset$, call that variable as $X_w$ and $X_w \in S_{>n}$. By definition of sink orders, $X_l \in S_{>n+1}$, $X_l \in W$. So $W$ is incomplete. Contradiction. 
    Under case 2), we show that there exists a collider in the path. Suppose there is no collider on the path, then all the edge are directed edges in one direction. Since edges can only go from higher sink orders to lower sink orders, the edges can only go from $X_i$ to $X_j$ in one direction. Since the path is also not a 1-arrow direct path, there $\exists X_a \in p, a \neq \{i, j\}$. Choose $X_a \in \textbf{PA}_j$, since it has a 1-arrow direct path to $X_j$, $X_a \in S_{\geq m + 1}$. Further since $X_i$ is an ancestor of $X_a$, $X_i \in S_{\geq m+2}$. Here $n = m+1$, so $X_i \in S_{\geq n+1}$. Contradicts the condition that $X_i \in S_n$. 
    Given the path connecting $X_i$ and $X_j$ either contains variables in higher sink orders and there exist a non-collider $X_k \in p$ and $X_k \in S_{>n}$., such path can be blocked by conditioning on the set $S_{>n}$; or the path only contains variables in $S_{\leq n}$ and the path must exist a collider, which we block the path by not conditioning on any variables in $S_{\leq n}$. 
    Hence the proposed conditional independence holds by d-separation. By Markov assumption, the conditional independence also holds in distribution.  
    
when $k > 1$:
    Similarly, under case 1), by same argument as above, there exist a non-collider $X_k \in p$ and $X_k \in S_{>n}$. Therefore conditioning on all variables in $S_{>n}$ blocks the set of paths under case 1. Under case 2) when all variables in the path belong to $S_{\leq n}$, then the parent of $X_j$ in this path, here we call it $X_p$, either (case 2.1) $X_p \in \textbf{PA}_j \cap S_{<n}$, or (case 2.2) $X_p \in \textbf{PA}_j \cap S_n$. Under case 2.1, $X_p$ is a non-collider in the path, since it has one outgoing edge. Then conditioning on $\textbf{PA}_j \cap S_{<n}$ blocks the set of paths under case 2.1. Under case 2.2), for any variable $X_m \in S_n$ on the path $p$, $X_m$ only has outgoing edges since all variables on the path belong to $S_{\leq n}$. $X_m$ is also a non-collider, thus conditioning on $S_n$ blocks the set of paths satisfying case 2.2). 
    Similarly, the proposed conditional independence holds by d-separation. By Markov assumption, the conditional independence also holds in distribution. 

Finally, we prove the other direction ($\Leftarrow$): suppose the proposed conditional independence holds, then there does not exist a 1-arrow direct path from $X_i$ to $X_j$. Suppose there exists a 1-arrow directed path and conditional independence holds, then conditioning on the proposed set does not d-separate the path. By faithfulness, conditional independence does not hold in distribution and the result follows. \looseness=-1
\end{proof}

\section{Relation to causality in time-series}
\label{sec:time-series}

Recall temporal SCMs: let $\mathbf{V}_t = (V_t^1, \ldots, V_t^N)$ represent the dynamic process variables underlying a multivariate time series. The structural assignments are: 
\begin{align}
    V_t^j := f^j(pa(V_t^j), \eta_t^j), \forall \ V_t^j \in \mathbf{V}_t \ \text{and} \ t \in \mathbb{Z}
\end{align}
with jointly independent random variables $\eta_t^j$. The causal parents $pa(V_t^j)$ are direct causes and a subset of $\{\mathbf{V}_t, \ldots, \mathbf{V}_{t - \tau_{max}}\} \backslash \{V_t^j\}$ with $\tau_{max} \geq 0$.

The non-negative integer $\tau_{max}$ means sequential data has directional influence, i.e., no future events could influence the current. Zero is included, as there may be contemporaneous causal influences $V_t^i \to V_t^j$. One could assume such SCM is causally stationary, i.e., the causal relationships and noise distributions are assumed to be invariant in time. 

Consider an ICM-generative process, the sample index is exchangeable, i.e., the order of observation does not matter. This is in contradiction to causality in time series as only earlier or contemporaneous samples could constitute potential causal parents of one variable. 

When one interprets the sample index in the ICM-generative process as time steps, for example in bivariate cases. $X_t \to Y_t, X_{t+1} \to Y_{t+1}, X_t \leftrightarrow X_{t+1}, Y_t \leftrightarrow Y_{t+1}, \forall t$. Firstly, it follows a degree of causally stationary assumption, i.e., the causal relationships are assumed to be invariant in time. Secondly, all sequences of time series have unobserved confounders that influence all the variables in time series. Sometimes, one may interpret such temporal SCMs as the out-of-variable problem, i.e. one lacks the observation of the unobserved confounder $\theta_t$, if it exists in the physical world. Suppose we observe $\theta_t, \psi_t, \forall t$, then ICM-generative process with sample index as time-steps can be rewritten as $\theta_t \to X_t \to Y_t \leftarrow \psi_t, \theta_{t+1} \to X_{t+1} \to Y_{t+1} \leftarrow \psi_{t+1}, \theta_t \to X_{t+1}, \psi_t \to Y_{t+1}$, with $\theta_t = \theta_0, \psi_t = \psi_0, \forall t$. Thus, it follows the temporal SCM formulation. It suggests data with less structure can be modelled by formulations for data with more complex structure with the right instantiation - this is the case for exchangeable sequences with i.i.d.\ data, and so is the case for time-series data with exchangeable data. 


\end{document}